\newtheorem{theorem}{Theorem}
\definecolor{darkgreen}{rgb}{0,0.5,0}
\definecolor{darkred}{rgb}{0.7,0,0}
\definecolor{teal}{rgb}{0.3,0.8,0.8}
\definecolor{orange}{rgb}{1.0,0.5,0.0}
\definecolor{purple}{rgb}{0.8,0.0,0.8}
\newcommand{\kibitz}[2]{\ifnum\Comments=1{\textcolor{#1}{\textsf{\footnotesize #2}}}\fi}
\definecolor{Gray}{gray}{0.9}
\newcommand{\mn}{\mathrm{mn}}
\newcommand{\pro}{\mathrm{pro}}
\newcommand{\both}{\mathrm{both}}
\newcommand{\ie}{\emph{i.e.}}
\newcommand{\eg}{\emph{e.g.}}
\begin{document}

\title{Minimax Instrumental Variable Regression 
and $L_2$ Convergence Guarantees
without Identification or Closedness}

\author[1]{Andrew Bennett \thanks{Alphabetical order }  }
\author[1]{Nathan Kallus}
\author[2]{Xiaojie Mao}
\author[3]{Whitney Newey} 
\author[4]{\\ Vasilis Syrgkanis }
\author[1]{Masatoshi Uehara \thanks{Corresponding: mu223@cornell.edu}} 

\affil[1]{Cornell University}
\affil[2]{Tsinghua University}
\affil[3]{Massachusetts Institute of Technology}
\affil[4]{Stanford University}
\maketitle

\maketitle

\begin{abstract}
{In this paper, we study nonparametric estimation of instrumental variable (IV) regressions. Recently, many flexible machine learning methods have been developed for instrumental variable estimation. However, these methods have at least one of the following limitations: (1) restricting the IV regression  to be uniquely identified; (2) only obtaining 
 estimation error rates in terms of pseudometrics (\emph{e.g.,} projected norm) rather than valid metrics (\emph{e.g.,} $L_2$ norm); or (3) imposing the so-called closedness condition that requires a certain conditional expectation operator to be sufficiently smooth. In this paper, we present the first method and analysis that can avoid all three limitations, while still permitting general function approximation. Specifically, we propose a new penalized minimax estimator that can converge to a fixed IV  solution even when there are multiple solutions, and we derive a strong $L_2$ error rate for our estimator under lax conditions. 
 Notably, this guarantee only needs a widely-used source condition and realizability assumptions, but not the so-called closedness condition. 
 We argue that the source condition and the closedness condition are inherently conflicting, so relaxing the latter significantly improves upon the existing literature that requires both conditions. 
 Our estimator can achieve this improvement because it builds on a novel formulation of the IV estimation problem as a constrained optimization problem. 
  }
\end{abstract}

\section{Introduction}

Instrumental variable (IV) estimation is an important problem in many applications. Examples
include causal inference \citep{angrist1995identification,newey2003instrumental,deaner2018proxy,cui2020semiparametric}, missing data problems \citep{wang2014instrumental,miao2015identification}, asset pricing models \citep{chen2014local,christensen2017nonparametric,escanciano2015nonparametric}, dynamic discrete choice models \citep{kalouptsidi2021linear}, and reinforcement learning \citep{liao2021instrumental,uehara2021finite}.

In this paper, we focus on the estimation of nonparametric IV (NPIV)  regression \citep{newey2003instrumental}.
This problem involves three sets of variables $X$, $Y$, and $Z$ that take values in compact Euclidean sets $D_X$, $D_Y$, and $D_Z$, respectively.
In the original IV estimation problem, $X$ stands for endogenous variables, $Y$ stands for an outcome variable, and $Z$ stands for exogenous IVs. 
We define $L_2(X),L_2(Z)$ as the $L_2$ spaces of functions of $X, Z$ respectively, defined in terms of their distributions. 
We are interested in solving the following equation with respect to $h \in L_2(Z)$: 
\begin{align*}
\EE\bracks{Y - h(X) \mid Z} = 0.
\end{align*}
This equation can be alternatively written as 
$\Tcal h = r_0$, where $r_0(Z) = \EE[Y \mid Z]$, and $\Tcal: L_2(X) \to L_2(Z)$ is a  bounded linear operator that maps every $h \in L_2(X)$ to $\EE\bracks{h(X) \mid Z} \in L_2(Z)$.
Here both the function $r_0$ and the operator $\Tcal$ are unknown. Instead, we only have access to a set of independent and identically distributed observations $\Dcal \coloneqq \{X_i,Y_i,Z_i\}_{i=1}^n$.

There has been a surge in interest in NPIV regressions. A number of classical works have proposed sieve or kernel-based estimators \citep[e.g.,][]{carrasco2007linear,horowitz2011applied,newey2013nonparametric,newey2003instrumental,chen2007large}.
However, NPIV estimation is notoriously difficult because it is an ill-posed inverse problem. 
In particular, the solution to the NPIV equation $\Tcal h = r_0$ may not be unique, and even if it is unique, the solution may depend on the underlying data distribution discontinuously \citep{carrasco2007linear}. 
Therefore, existing works typically assume that the NPIV solution is unique \citep{andrews2017examples,newey2003instrumental}. Even if it is not the case, they restrict the linear operator $\Tcal$ and the NPIV solution \citep{florens2011identification,chen2021robust}. A widely used restriction is the source condition, which assumes that the IV solution belongs to a subspace defined by the operator $\Tcal$ \citep[e.g.,][]{carrasco2007linear,cavalier2011inverse,chen2011rate}. 
Under these conditions, the estimators proposed in these classic literature can have strong theoretical guarantees. 
However, these traditional nonparametric estimators do not allow for the integration of modern, flexible general function approximation methods such as neural networks or tree-based methods.

To overcome this limitation, recent works have proposed various algorithms that can accommodate general function approximation. These algorithms typically employ two function classes, $\Hcal$ and $\Gcal$.
 In particular, the function class $\Hcal$ is the hypothesis class for the solution to the NPIV equation $\Tcal h=r_0$. 
 The  function class $\Gcal$, often referred to as a witness function class or discriminator class, is introduced to witness how much each given function $h$ violates the NPIV equation. Then, NPIV estimators are defined as solutions to a minimax optimization problem \citep{lewis2018adversarial,bennett2019deep,dikkala2020minimax,LiaoLuofeng2020PENE,muandet2020dual}:
\begin{align*}
    \argmin_{h \in \Hcal}\max_{g \in \Gcal}L(h,g)
\end{align*}
where $L(h,g)$ is an objective function mapping from $\Hcal \times \Gcal$ to $\mathbb{R}$. 

\begin{table}[!t]
    \centering
     \caption{Summary of current literature of minimax estimation with general function approximation. Our goal is to solve $\Tcal h= r_0$ with respect to $h$ with unknown $\Tcal$ and $r_0$. We denote its set of solutions by $\Hcal_0$ and the 
      least norm solution by $h_0$. 
      Estimators are defined as solutions to certain minimax optimizations $\min_{h\in \Hcal}\max_{g\in \Gcal}L(h,g)$ where $\Hcal$ and $\Gcal$ are hypothesis classes. For simplicity, we focus on comparison for VC classes $\Hcal$ and $\Gcal$ (while the results in both our and these papers deal with general function classes) and for source condition with exponent 1. 
      We let $\Tcal^{*}$ be the adjoint of $\Tcal$, $\bar\Gcal_0=\{\bar g_0:\Tcal^{*}\bar g_0 =h_0\}$ (nonempty under source condition), $h_{0,\alpha}=\argmin_{h}\|\Tcal h-r_0\|^2_2 +\alpha \|h\|^2_2$, and $\alpha_0>0$ a positive number. Note our condition is strictly weaker than that of \citet{bennett2022inference}. 
}
 \resizebox{1.0\textwidth}{!}{
    \label{tab:summary_whole}
    \begin{tabular}{cccc}
    \toprule
         &  Primary assumptions & Guarantee  & Rate  \\ \midrule
    \citet{dikkala2020minimax}  & realizability $\Hcal_0 \cap \Hcal\neq \emptyset$, closedness $\Tcal \Hcal \subset \Gcal+r_0$ & Projected MSEs & $n^{-1/2}$\\  
    \citet{LiaoLuofeng2020PENE}  &  
    \begin{tabular}{c}
       source $h_0 \in \Rcal(\Tcal^{*}\Tcal)$ , uniqueness of $h_0$,  \\ 
       realizability $ h_{0,\alpha} \in \Hcal\;\forall\alpha\leq\alpha_0$, closedness $ \Tcal \Hcal \subset \Gcal+r_0$   
    \end{tabular}
    &  $L_2$ rates  &  $n^{-1/6}$ \\ 
    \citet{bennett2022inference}   & \begin{tabular}{c}
       source $h_0 \in \Rcal(\Tcal^{*}\Tcal)$,    \\ 
       realizability $h_0 \in \Hcal
       ,\,\bar \Gcal_0 \cap \Gcal \neq \emptyset
       $ and closedness $ \Tcal^{*} \Gcal \subset \Hcal$   
    \end{tabular}   & $L_2$ rates &  $n^{-1/4}$  \\   \rowcolor{Gray}
    This work    &  \begin{tabular}{c} source $h_0 \in \Rcal(\Tcal^{*}\Tcal)$ \\ 
          realizability $h_0 \in \Hcal
          ,\,\bar \Gcal_0\cap \Gcal \neq \emptyset
          $  
              \end{tabular}  
  & $L_2$ rates  & $n^{-1/4}$ \\ 
      \bottomrule
    \end{tabular}
    }

\end{table}

Although highly flexible, these  minimax approaches have several limitations. 
First, they typically assume that 
the solution to the NPIV equation $\Tcal h=r_0$ is unique.  
However, this assumption can be easily violated if the instrumental variables are not very strong \citep{andrews2005inference,Andrews2019weak}, and they usually do not hold in proximal causal inference \citep{kallus2021causal}. 
Secondly, the minimax estimators may not give strong $L_2$ error rate guarantees, and instead only have error rate guarantees in terms of a weaker projected mean squared error (MSE) \citep{dikkala2020minimax}. 
However, even when the projected MSE vanishes to zero, the minimax estimator may not converge to any fixed IV solution since the projected MSE is a pseudometric unlike the $L_2$ metric. %
Thirdly, current minimax estimators typically need some form of closedness condition, such as $\Tcal h \in \Gcal$ for any $h \in \Hcal$ \citep{dikkala2020minimax,LiaoLuofeng2020PENE} or other close variant 
 \citep{bennett2022inference}.
However, this assumption may impose stringent restrictions on the operator $\Tcal$, noting that $\Gcal$ must be a restricted class to ensure bounded statistical complexity. In particular, the closedness assumption is at odds with the widely used source condition, since we will show that the closedness assumption is more plausible when the spectrum of $\Tcal$ decays more slowly while the source condition is more plausible when the spectrum decays more rapidly.

To the best of our knowledge, all current approaches incorporating general function approximation for IV problems suffer from at least one of the three limitations listed above. In this paper, we propose the first method that avoids all three of these limitations. Specifically, we do not assume that the NPIV solution is unique, and instead we target the least norm solution $h_0$. 
This is a standard approach for inverse problems with non-unique solutions \citep{florens2011identification,babii2017completeness,chen2021robust,bennett2022inference}.
We show that our proposed estimator can converge to the least norm IV solution and derive its $L_2$ error rate guarantee.
These theoretical guarantees only need the fairly standard source condition and realizability assumptions (\ie, well-specification of $\Hcal$ and $\Qcal$). \Cref{tab:summary_whole}  summarizes the assumptions and  guarantees in our paper and related ones.

Our proposed estimator and its theory are grounded in the novel insight that finding the least norm solution $h_0$ to $\Tcal h=r_0$ can be viewed as a constrained optimization problem. 
In particular, we show that the least norm solution 
can be uniquely identified as a saddle point of the minimax optimization of the Lagrangian.  
Although previous minimax estimators also leverage minimax optimization, their  inner maximization is used to approximate the projected MSE $\EE[{\prns{[\Tcal h](Z) - r_0(Z)}^2}]$, which necessitates the closedness assumption. 
In contrast, the inner maximization in our methods results from the method of Lagrange multipliers, and it does not need the closedness assumption. 
Interestingly, we prove that the source condition is the sufficient and necessary condition for the existence of stationary Lagrange multipliers and thus the saddle point to our minimax optimization problem. 
This also reveals a new role of the  source condition widely used in inverse problems.

Our paper is organized as follows. In \pref{sec:setup}, we present our setup of IV estimation and the limitations of current works in this setting. In \pref{sec:algorithm}, we introduce our minimax estimator by framing the problem as a constrained optimization problem. In \pref{sec:primary_identification}, we demonstrate that the minimax optimization identifies the least norm solution given infinite data. In \pref{sec:finite}, we present the finite-sample error guarantee, i.e., $L_2$ convergence rate. In \pref{sec:discussion}, we compare our estimator and theory to those in closely related works. Finally, we conclude our paper in \pref{sec:conclusion}.

\subsection{Related Works}\label{subsec:related}

Instrumental variable estimation has received considerable attention as a subclass of inverse problems, as detailed in the works of \citet{carrasco2007linear,cavalier2011inverse, newey2013nonparametric,ito2014inverse}.

Even when the operator $\Tcal$ and response $r_0$ are known, nonparametric instrumental variable estimation poses significant difficulties due to its ill-posed nature. The ill-posedness often refers to the presence of one or more of the following characteristics: (1) the absence of solutions, (2) the existence of multiple solutions, and (3) the discontinuity of the inverse of $\Tcal$. To address these challenges, various regularization techniques have been proposed, such as compactness of the solution space \citep{newey2003instrumental}, Tikhonov regularization, and Landweber–Fridman regularization \citep{carrasco2007linear,cavalier2011inverse}. In practical settings where $\Tcal$ and $r_0$ are unknown, a range of estimators have been proposed in the literature, including series-based estimators \citep{ai2003efficient,hall2005nonparametric,blundell2007semi,chen2011rate,darolles2011nonparametric,chen2012estimation,florens2011identification,chen2021robust}, kernel-based estimators \citep{hall2005nonparametric,horowitz2007asymptotic}, and RKHS-based estimators \citep{singh2019kernel,muandet2020dual}.

Recently, there has been growing interest in the application of general function approximation techniques, such as deep neural networks and random forests, to instrumental variable problems in a unified manner \citep{dikkala2020minimax,lewis2018adversarial,bennett2019deep,zhang2020maximum}. Among these approaches, \citet{dikkala2020minimax,LiaoLuofeng2020PENE,bennett2022inference} provide finite-sample convergence rate guarantees. Specifically, \citet{LiaoLuofeng2020PENE} establishes $L_2$ convergence by linking minimax optimization with Tikhonov regularization under the assumption of the source condition. \citet{bennett2022inference} establishes an $L_2$ convergence guarantee under the source condition from a distinct perspective. Notably, the assumptions we need are strictly weaker than those of \citet{bennett2022inference}. \citet{dikkala2020minimax} guarantees convergence in terms of projected mean squared error without the source condition; however, this guarantee is insufficient to identify a specific element when the solution is not unique. These works \citep{dikkala2020minimax,LiaoLuofeng2020PENE,bennett2022inference} rely on the so-called closedness assumption, which imposes restrictions on the smoothness of the operator $\Tcal$ via the witness class. This assumption has been the subject of considerable discussion in the context of offline reinforcement learning, with researchers exploring ways to relax it \citep{chen2019information,uehara2020minimax,foster2021offline,huang2022beyond}. In this paper, we examine the relaxation of this assumption in a more general IV setting. { This is of importance since the source condition and closedness are inherently conflicting.  }

We note that there are a number of alternative approaches for integrating machine learning into instrumental variable estimation \citep{hartford2017deep,yu2018deep,xu2020learning,liu2020deep,kato2021learning,lu2021machine}. However, to the best of our knowledge, these approaches do not offer an $L_2$ convergence rate guarantee in the absence of the assumption of uniqueness.

\section{Problem Setup}\label{sec:setup}

We aim to solve the following equation with respect to $h$: 

\begin{align}\label{eq:data}
 \textstyle    \Tcal h = r_0
\end{align}

where  $r_0(Z) = \EE[Y\mid Z]\in L_2(Z)$ is an unknown function and 
$\Tcal: L_2(X) \to L_2(Z)$ is an unknown conditional expectation operator that maps any $h \in L_2(X)$ to $\EE[h(X)\mid Z]$.
Note that $\Tcal$ is a bounded  operator since its norm  is upper-bounded by 1 via Jensen's inequality. 
Moreover, we use $\Tcal^{\star}: L_2(Z) \to L_2(X)$ to denote the adjoint operator of $\Tcal$, \ie,  $\langle g,\Tcal h\rangle_{L_2(Z)}=\langle \Tcal^{\star}g,h\rangle_{L_2(X)}$ for any $h \in L_2(X), g \in L_2(Z)$ where $\langle \cdot,\cdot \rangle_{L_2(X)}$ and $\langle \cdot,\cdot \rangle_{L_2(Z)}$ are inner products over $L_2(X)$ and $L_2(Z)$, respectively. It is known that $\Tcal^{*}$ is given by $[\Tcal^* g](X) = \EE[g(Z)\mid X]$ for any $g \in L_2(Z)$ \citep{carrasco2007linear}. 
Importantly, here we do not assume compactness of $\Tcal$, because compactness is violated whenever $X, Z$ include common variables, as is the case in many applications \citep{deaner2018proxy,cui2020semiparametric}. 
Moreover, we denote the range space of $\Tcal$ by $\Rcal(\Tcal)$, i.e., $\Rcal(\Tcal)=\{\Tcal h: h \in L_2(X)\}$. 

Throughout this work, we assume that there exists a solution to \Cref{eq:data}.

\begin{assum}[Existence of solutions]\label{assum:exist}
We have $r_0 \in \Rcal(\Tcal)$, i.e., $\Ncal_{r_0}(\Tcal)\coloneqq \{h\in \Hcal: \Tcal h=r_0\} \ne \emptyset$. 
\end{assum}

Most of the existing literature further assumes that $\Tcal$ is injective and the solution to \Cref{eq:data} is unique. 
However, even in this case, \Cref{eq:data} still corresponds to an ill-posed inverse problem, since the inverse operator $\Tcal^{-1}$ is generally unbounded, so the NIPV solution can be very sensitive to even slight perturbations to the data distributions. 
Without further restrictions, we can only obtain an estimator $\hat h$ with convergence guarantee in terms of the projected MSE $\EE[\{\Tcal \hat h-r_0\}^2(Z)] = \EE[\{\Tcal (\hat h-h)\}^2(Z)]$ for  $h\in \Ncal_{r_0}(\Tcal)$.  
However, the projected MSE is only a pseudometric. Hence, even if $\EE[\{\Tcal(\hat h-h)\}^2(Z)]$ vanishes to zero, the estimator $\hat h$ may not converge to a fixed point. Furthermore, the projected MSE is weaker than the valid metric such as the $L_2$ metric. 
Indeed, according to Jensen's inequality, we have 
$\EE[\{\hat h-h)\}^2(X)]\geq \EE[\{\Tcal(\hat h-h)\}^2(Z)]. $
However, the other direction generally does not hold. Thus $\EE[\{\hat h-h)\}^2(X)]$ may not vanish even when $\EE[\{\Tcal(\hat h-h)\}^2(Z)]$ does. 

In many problems, $L_2$ rate guarantees are preferable or even necessary \citep{hall2005nonparametric,chen2011rate,kallus2021causal,uehara2021finite}. 
In order to achieve $L_2$ convergence, we need to further restrict the ill-posedness of the NPIV problem. 
One common way is to restrict the magnitude of the ill-posedness measure $\sup_{h \in \Hcal}\frac{ \EE[\{h-h')\}^2(X)] }{\EE[\{\Tcal(h-h')\}^2(Z)]}$ for any solution $h' \in \Ncal_{r_0}(\Tcal)$, where 
$\Hcal$  is  the function class used to obtain the estimator \citep[\eg, ][]{dikkala2020minimax,chen2012estimation}. 
This allows us to translate projected MSE guarantees to corresponding $L_2$ error rates under the uniqueness of \Cref{eq:data}.

However, in this paper, we do not assume a unique solution to \Cref{eq:data}, because it may not hold in many practical settings. 
In particular, uniqueness is   violated when instrumental variables are weak \citep{andrews2005inference,Andrews2019weak}. 
For instance, when the spaces $D_X$ and $D_Z$ are discrete and the cardinality of $D_Z$ exceeds that of $D_X$,  uniqueness generally does not  hold. 
Moreover, uniqueness is usually violated in proximal causal inference, as \cite{kallus2021causal} demonstrates in various examples.
When solutions are non-unique, \Cref{eq:data} becomes even more ill-posed.  
In this case, existing estimators may still have projected MSE guarantees, but obtaining $L_2$ rate guarantees becomes much more difficult. 
In particular, the ill-posedness measure  is generally infinity and thus uninformative. 
Most of the existing estimators do not necessarily converge to any particular solution in $\Ncal_{r_0}(\Tcal)$ in terms of the $L_2$ metric.

Given that there may be (infinitely) many solutions in $\Ncal_{r_0}(\Tcal)$, we propose to target a particular solution that achieves the least norm, that is,
\begin{align}\label{eq: min-norm}
h_0 = \argmin_{h \in \Ncal_{r_0}(\Tcal)} ~ 0.5\langle h, h\rangle_{L_2(X)}. 
\end{align}
This least norm solution is well-defined as it is the projection of the origin in $L_2(X)$ onto a closed affine space $\Ncal_{r_0}(\Tcal) \subseteq L_2(X)$. We formalize this in the following lemma. 
\begin{lemma}\label{lem:minimum}
Suppose Assumption~\ref{assum:exist} holds. Then the least norm solution $h_0 \in \Ncal_{r_0}(\Tcal)$ uniquely exists, and $\{h_0\}= \overline{\Rcal(\Tcal^{\star})} \cap \Ncal_{r_0}(\Tcal)$, where $ \overline{\Rcal(\Tcal^{\star})}$ is the closure of the range space ${\Rcal(\Tcal^{\star})}$. 
\end{lemma}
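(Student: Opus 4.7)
The plan is to argue in two stages: first establish existence and uniqueness of the least norm solution via a standard Hilbert space projection argument, and then identify it as the unique intersection point claimed in the statement.

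First I would observe that $\mathcal{N}_{r_0}(\mathcal{T})$ is a closed affine subspace of $L_2(X)$: it is nonempty by \Cref{assum:exist}, it is affine because $\mathcal{T}$ is linear (so $\mathcal{N}_{r_0}(\mathcal{T}) = h^\star + \mathcal{N}(\mathcal{T})$ for any particular solution $h^\star$, where $\mathcal{N}(\mathcal{T}) = \{h : \mathcal{T} h = 0\}$), and it is closed because $\mathcal{T}$ is bounded hence continuous, making $\mathcal{N}(\mathcal{T})$ closed. Since $h \mapsto 0.5\langle h,h\rangle_{L_2(X)}$ is the squared distance from the origin (up to a factor), \Cref{eq: min-norm} is exactly the problem of projecting $0 \in L_2(X)$ onto a nonempty closed convex set in a Hilbert space; the Hilbert projection theorem then yields existence and uniqueness of the minimizer $h_0$.

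Next I would characterize $h_0$ via the usual first-order (variational) condition for projection onto an affine space: $h_0$ is the minimum norm element of $h^\star + \mathcal{N}(\mathcal{T})$ if and only if $h_0 \in \mathcal{N}_{r_0}(\mathcal{T})$ and $h_0 \perp \mathcal{N}(\mathcal{T})$, i.e.\ $h_0 \in \mathcal{N}(\mathcal{T})^{\perp}$. (This follows by noting that for any $v \in \mathcal{N}(\mathcal{T})$, the function $t \mapsto \|h_0 + tv\|_{L_2(X)}^2$ is minimized at $t=0$, which forces $\langle h_0, v\rangle_{L_2(X)} = 0$; conversely, orthogonality plus the Pythagorean identity shows $h_0$ has minimal norm.)

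Finally I would invoke the standard Hilbert-space identity $\mathcal{N}(\mathcal{T})^{\perp} = \overline{\mathcal{R}(\mathcal{T}^{\star})}$, which holds for any bounded linear operator between Hilbert spaces and follows directly from the definition of the adjoint: $\langle h, \mathcal{T}^{\star} g\rangle_{L_2(X)} = \langle \mathcal{T} h, g\rangle_{L_2(Z)}$ vanishes for all $g \in L_2(Z)$ exactly when $h \in \mathcal{N}(\mathcal{T})$. Combining this with the previous step gives $\{h_0\} = \overline{\mathcal{R}(\mathcal{T}^{\star})} \cap \mathcal{N}_{r_0}(\mathcal{T})$, with the intersection being a singleton because the affine space $\mathcal{N}_{r_0}(\mathcal{T})$ and the closed subspace $\overline{\mathcal{R}(\mathcal{T}^{\star})} = \mathcal{N}(\mathcal{T})^{\perp}$ meet in at most one point (any two points in the intersection would differ by a vector lying simultaneously in $\mathcal{N}(\mathcal{T})$ and $\mathcal{N}(\mathcal{T})^{\perp}$, hence zero).

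No step really presents a serious obstacle; the only thing to be careful about is that compactness of $\mathcal{T}$ is not assumed, so the range $\mathcal{R}(\mathcal{T}^{\star})$ need not be closed, which is precisely why the closure appears in the statement. The orthogonal-complement identity, however, holds regardless of whether $\mathcal{R}(\mathcal{T}^\star)$ is closed, so the proof goes through verbatim.
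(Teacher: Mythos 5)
Your proof is correct and follows essentially the same route as the paper's: both identify $\Ncal_{r_0}(\Tcal)$ as the closed affine space $h^\star + \Ncal(\Tcal)$, characterize $h_0$ as the orthogonal projection onto $\Ncal(\Tcal)^{\perp}$, and conclude via the standard identity $\Ncal(\Tcal)^{\perp} = \overline{\Rcal(\Tcal^{\star})}$. You merely spell out the intermediate steps (closedness, the variational first-order condition, and the singleton argument) that the paper leaves implicit.
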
   

We note that some of the existing literature also targets the least norm solution when the IV equation admits non-unique solutions \citep{florens2011identification,santos2011instrumental,chen2021robust}, but they all focus on classic sieve or kernel-based estimators. 
The only exception is \cite{bennett2022inference} as they employ general function approximation while allowing for non-unique solutions. But as we discuss in \Cref{sec: compare-bennett}, their method requires a  closedness assumption that puts strong restrictions on the operator $\Tcal$. In this paper, we propose a new estimator for the least norm solution $h_0$ with a strong $L_2$ convergence guarantee. 
Importantly, our estimator 
accommodates general function approximation but does not need the closedness assumption, thereby improving upon the existing literature.

\section{Penalized Minimax Instrumental Variable Regression}\label{sec:algorithm}

In this section, we propose our estimator for the least norm solution $h_0$ in \Cref{eq: min-norm}. To this end, we first provide a reformulation of the solution $h_0$. Note that 
\begin{align*}
h_0 = \argmin_{h \in L_2(X)} 0.5\langle h, h\rangle_{L_2(X)}, ~ \text{subject to } \Tcal h = r_0.
\end{align*}
This is a constrained optimization problem over the Hilbert space $L_2(X)$. Following the method of Lagrange multipliers, we can  consider 
an alternative minimax optimization: 
\begin{align}\label{eq:motivatioin2}
     h_0 = \argmin_{h \in L_2(X) }\sup_{g \in L_2(Z)}L(h,g),\quad L(h,g)\coloneqq 0.5 \langle h,h\rangle_{L_2(X)}+\langle r_0-\Tcal h,g\rangle_{L_2(Z)}, 
\end{align}
where $g$ corresponds to a Lagrange multiplier. 

In \Cref{eq:motivatioin2}, the objective function $L(h, q)$ is unknown since the two inner products involve the unknown function $r_0$,  the unknown operator $\Tcal$, and the unknown distribution of $X$ and $Z$.
To construct an estimator based on \Cref{eq:motivatioin2}, we first rewrite the inner products into expectations with respect to the distribution of $X$ and $Z$: 
\begin{align*}
\langle h,h\rangle_{L_2(X)} = \EE\bracks{h^2(X)}, ~~ \langle r_0-\Tcal h,g\rangle_{L_2(Z)} = \EE\bracks{\prns{Y - h(X)}g(Z)}. 
\end{align*}
Then we can replace the unknown expectations with empirical averages, and restrict the functions $h$ and $g$ to some classes $\Hcal\subset [D_X \to \RR]$ and $\Gcal \subset [D_Z\to \RR]$. 
This leads to the following estimator: 
\begin{align}\label{eq:motivatioin3}
 \hat h_{\mn} \in \argmin_{h \in \Hcal }\max_{g \in \Gcal} L_n(h,g),\quad  L_n(h,g)\coloneqq 0.5 \EE_n[h^2(X)]+\EE_n\bracks{\prns{Y-h(X)}g(Z)},
\end{align}
where $\EE_n\bracks{\cdot}$ stands for the empirical average operator based on sample data $\Dcal=\{X_i,Y_i,Z_i\}$. For example, we have $\EE_n[h^2(X)] = \frac{1}{n}\sum_{i=1}^n h^2(X_i)$. 
Notably, the term $\EE_n[h^2(X)]$ in \Cref{eq:motivatioin3} can be viewed as a penalization term, so we call our estimator a penalized minimax estimator. The role of this penalization term is later discussed in \pref{thm:idenfication}. 

The estimator $\hat h_{\mn}$ in \Cref{eq:motivatioin3} has a minimax optimization formulation. The computational perspective will be discussed in \pref{sec:comptuational}. This is in line with many recent machine learning IV estimators with general function approximation (see a review in \Cref{subsec:related}). 
However, our minimax optimization in \Cref{eq:motivatioin3} is motivated by the method of Lagrange multipliers, while existing minimax estimators are based on fundamentally different principles. 
As a result, our objective function $L_n(h, g)$ differs from those used in existing minimax estimators. 
In particular, our minimax estimator requires quite different conditions, as we will discuss in \Cref{sec:discussion}.

To justify the objective function in \pref{eq:motivatioin3}, we need to further guarantee that  
\begin{align}\label{eq:goal}
    h_0  = \argmin_{h \in \Hcal}\max_{g \in \Gcal}L(h,g).
\end{align}
In Section~\ref{sec:primary_identification}, we establish \Cref{eq:goal} under fairly mild conditions. 
Based on this, we then further derive the $L_2$ convergence rate of our proposed estimator $\hat h_{\mn}$.

\section{Identification of the Least Norm Solution}\label{sec:primary_identification}

In this section, we establish  that our proposed minimax formulation can indeed identify the least norm solution $h_0$ as shown in \Cref{eq:goal}. 
We start with introducing a  key assumption for our result, and then present our identification result under this assumption.

\subsection{Source Condition}

Our identification crucially depends on the following source condition. 
\begin{assum}[Source condition]\label{assum:source}
The function $r_0$ satisfies that $r_0\in \Rcal(\Tcal\Tcal^{\star})$.     
\end{assum}

Assumption~\ref{assum:source} further strengthens Assumption~\ref{assum:exist} in that it restricts $r_0$ to a smaller subspace $\Rcal(\Tcal\Tcal^{\star}) \subseteq \Rcal(\Tcal)$. 
In particular, we have $\Rcal(\Tcal) = \Tcal(\overline{\Rcal({\Tcal^{\star}})})$\footnote{To see this note that $L_2(X) = \overline{\Rcal(\Tcal^\star)} \oplus \Rcal(\Tcal^\star)^\perp$, and $\Rcal(\Tcal^\star)^\perp = \Ncal(\Tcal)$, so $\Tcal(L_2(X)) = \Tcal(\overline{\Rcal(T^\star)})$.} and $\Rcal(\Tcal \Tcal^\star) = \Tcal(\Rcal({\Tcal^{\star}}))$, so $\Rcal(\Tcal \Tcal^\star)$ is generally a \emph{strict} subset of  $\Rcal(\Tcal)$,  unless $\Rcal(\Tcal^\star)$ is a closed set.
It is well known that for ill-posed inverse problems, the operator $\Tcal^\star$ generally does not have a closed range space \citep{carrasco2007linear}, thus in general Assumption~\ref{assum:source}  imposes non-trivial restrictions on the ill-posedness of the inverse problem. 
In \Cref{subsec:tension}, we provide a more concrete example to illustrate these restrictions. 

Source conditions are common assumptions used to derive strong convergence rate guarantees in the inverse problem literature.
They have been 
widely used for both inverse problems with known operators \citep[\eg, ][]{engl1996regularization,ito2014inverse} and IV problems with unknown operators \citep[\eg, ][]{florens2011identification,carrasco2007linear,liao2021instrumental}. 
A standard source condition in the literature is that the solution $h_0$ satisfies $h_0 \in \Rcal((\Tcal\Tcal^{\star})^{\beta/2})$ for a positive exponent $\beta > 0$. 
Our source condition in Assumption~\ref{assum:source} can be shown to be equivalent to $h_0 \in \Rcal((\Tcal\Tcal^{\star})^{1/2})$ via the spectral theory of linear operators \citep{cavalier2011inverse}. 
Thus, our Assumption~\ref{assum:source} is a source condition of this kind with source exponent $\beta = 1$. 

Assumption~\ref{assum:source} implies that  
there exists $\bar{g}_0 \in L_2(Z)$ such that 
\begin{align}\label{eq:important}
r_0 = \Tcal\Tcal^\star \bar{g}_0.
\end{align}
In fact, any $\bar g_0$ satisfying \Cref{eq:important} is closely related to the least norm solution $h_0$.

\begin{lemma}\label{lem:norm}
If Assumption~\ref{assum:source} holds, then $\bar g_0$ satisfies \eqref{eq:important} if and only if $\Tcal^\star \bar g_0 = h_0$.
\end{lemma}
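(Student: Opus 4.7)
The plan is to prove the two directions separately, leveraging the characterization of $h_0$ provided by Lemma~\ref{lem:minimum}, namely $\{h_0\} = \overline{\Rcal(\Tcal^\star)} \cap \Ncal_{r_0}(\Tcal)$.

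For the easy direction ($\Leftarrow$), I would assume $\Tcal^\star \bar g_0 = h_0$, apply $\Tcal$ to both sides, and use the fact that $h_0 \in \Ncal_{r_0}(\Tcal)$ means $\Tcal h_0 = r_0$. This immediately yields $\Tcal \Tcal^\star \bar g_0 = \Tcal h_0 = r_0$, which is \eqref{eq:important}.

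For the reverse direction ($\Rightarrow$), I would assume that $\bar g_0$ satisfies $r_0 = \Tcal \Tcal^\star \bar g_0$, and show that $\Tcal^\star \bar g_0$ must equal $h_0$. Set $h := \Tcal^\star \bar g_0$. Two observations: first, $\Tcal h = \Tcal \Tcal^\star \bar g_0 = r_0$, so $h \in \Ncal_{r_0}(\Tcal)$; second, $h = \Tcal^\star \bar g_0 \in \Rcal(\Tcal^\star) \subseteq \overline{\Rcal(\Tcal^\star)}$. Hence $h \in \overline{\Rcal(\Tcal^\star)} \cap \Ncal_{r_0}(\Tcal)$, and by Lemma~\ref{lem:minimum} this intersection is the singleton $\{h_0\}$, so $h = h_0$.

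There is essentially no obstacle here once Lemma~\ref{lem:minimum} is in hand — the entire argument is a direct invocation of that characterization plus the definitional identity $\Tcal \Tcal^\star = \Tcal \circ \Tcal^\star$. The only subtlety worth flagging is that the $\Rightarrow$ direction does not require uniqueness of $\bar g_0$ itself: different Lagrange multipliers $\bar g_0$ solving \eqref{eq:important} may exist (they differ by elements of $\Ncal(\Tcal^\star)$), but all of them are forced to share the same image $\Tcal^\star \bar g_0 = h_0$ under $\Tcal^\star$. This observation is what makes the statement a clean ``if and only if'' despite the potential non-uniqueness of $\bar g_0$.
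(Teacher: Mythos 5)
Your proof is correct and follows exactly the route the paper intends: the paper's own proof is just the one-line remark ``It is clear from Lemma~\ref{lem:minimum},'' and your argument is precisely the spelled-out version of that, using the characterization $\{h_0\} = \overline{\Rcal(\Tcal^\star)} \cap \Ncal_{r_0}(\Tcal)$ in the forward direction and the definitional identity $\Tcal h_0 = r_0$ in the reverse. Nothing is missing or different in substance.
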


In particular, given \Cref{lem:norm}, the functions $\bar g_0$ that satisfy \Cref{eq:important} are given by:
\begin{align}\label{eq: g-funs}
     \Ncal_{h_0}(\Tcal^{\star}) \coloneqq \{g\in L_2(Z) : \Tcal^{\star}g = h_0 \}. 
\end{align}

In the next subsection, we will show the importance of the source condition given by Assumption~\ref{assum:source}. 
In particular, this condition ensures that we can obtain $h_0$ from the saddle points of $L(h,g)$.

\subsection{Saddle Points of the Minimax Optimization}\label{subsec:saddle}

Here, we characterize the saddle points of $L(h,g)$ under Assumption~\ref{assum:source}, as follows:

\begin{lemma}\label{lem:saddle_point}
Suppose Assumption~\ref{assum:source} holds and let $h_0$ be the least norm solution in \Cref{eq: min-norm} and $\Ncal_{h_0}(\Tcal^{\star})$ be the set of functions given in \Cref{eq: g-funs}. 
Then, the set of saddle points of $L(h,g)$ over $h\in L_2(X), g\in L_2(Z)$, i.e., the points $(h',g')$ that satisfy
\begin{align*}
    L(h,g') \geq L(h',g') \geq  L(h',g),~ \forall h\in L_2(X), \forall g\in L_2(Z), 
\end{align*}
is given by the set $\braces{h_0} \times \Ncal_{h_0}(\Tcal^{\star}) = \braces{(h_0, \bar g): \bar g \in \Ncal_{h_0}(\Tcal^{\star})}$.
\end{lemma}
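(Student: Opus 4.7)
The plan is to exploit the fact that $L(h,g)$ is strongly convex in $h$ (for fixed $g$) and affine in $g$ (for fixed $h$), so the saddle point condition decouples into two first-order conditions that I can solve explicitly.

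First I would unpack the saddle-point definition into its two ingredients: for $(h',g')$ to be a saddle point, (i) $h'$ must minimize $h \mapsto L(h,g')$ over $L_2(X)$, and (ii) $g'$ must maximize $g \mapsto L(h',g)$ over $L_2(Z)$. For (ii), observe that using the adjoint we can write $L(h',g) = 0.5\langle h',h'\rangle_{L_2(X)} + \langle r_0 - \Tcal h', g\rangle_{L_2(Z)}$, which is affine in $g$. Hence the supremum over $g \in L_2(Z)$ is finite (and therefore attained at any $g'$) if and only if the linear term vanishes, i.e.\ $\Tcal h' = r_0$. This gives the feasibility condition $h' \in \Ncal_{r_0}(\Tcal)$, and whenever it holds, $L(h',g)$ is constant in $g$, so every $g' \in L_2(Z)$ trivially maximizes it.

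Next, for (i), rewrite $L(h,g') = 0.5\langle h,h\rangle_{L_2(X)} - \langle h, \Tcal^{\star} g'\rangle_{L_2(X)} + \langle r_0, g'\rangle_{L_2(Z)}$. This is strongly convex in $h$ with a unique minimizer characterized by the first-order condition $h - \Tcal^{\star} g' = 0$, i.e.\ $h' = \Tcal^{\star} g'$. Combining (i) and (ii) gives the two identities $h' = \Tcal^{\star} g'$ and $\Tcal h' = r_0$, which together yield $\Tcal \Tcal^{\star} g' = r_0$. By Lemma~\ref{lem:norm}, this is equivalent to $\Tcal^{\star} g' = h_0$, so $h' = h_0$ and $g' \in \Ncal_{h_0}(\Tcal^{\star})$.

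For the converse, suppose $h' = h_0$ and $g' \in \Ncal_{h_0}(\Tcal^{\star})$. Then $\Tcal^{\star} g' = h_0 = h'$ satisfies the first-order condition for the strongly convex problem $\min_h L(h,g')$, so $h'$ is its unique minimizer. Also, by Lemma~\ref{lem:norm}, $\Tcal^{\star} g' = h_0$ implies $\Tcal\Tcal^{\star} g' = r_0$, and since $h_0 \in \Ncal_{r_0}(\Tcal)$ we have $\Tcal h' = r_0$, so $L(h',g)$ is constant in $g$ and $g'$ is a maximizer. This verifies the saddle-point inequalities.

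The main conceptual point (rather than obstacle) is the step using Lemma~\ref{lem:norm}: the source condition is precisely what makes $\Ncal_{h_0}(\Tcal^{\star})$ nonempty, so without it no saddle point would exist. The only minor technicality to be careful about is that the outer supremum over $g \in L_2(Z)$ is over an unbounded set, so one must argue that the affine functional $g \mapsto \langle r_0 - \Tcal h', g\rangle_{L_2(Z)}$ has a finite maximum only when its linear coefficient is zero; this is standard but worth stating explicitly.
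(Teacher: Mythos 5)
Your proof is correct, and it is logically structured a bit differently from the paper's. The paper proves the result by first computing the two marginal problems $\argmin_{h}\sup_{g}L(h,g)$ and $\argmax_{g}\inf_{h}L(h,g)$, verifying that their optimal values agree (strong duality), and then invoking an auxiliary lemma that identifies the set of saddle points with the Cartesian product of the two $\arg$-sets. You instead unpack the saddle-point definition directly into its two one-sided optimality conditions: $h'$ minimizes $L(\cdot,g')$, and $g'$ maximizes $L(h',\cdot)$. Solving the two first-order conditions then gives $h' = \Tcal^\star g'$ and $\Tcal h' = r_0$, hence $\Tcal\Tcal^\star g' = r_0$, which by Lemma~\ref{lem:norm} forces $\Tcal^\star g' = h_0$ and $h' = h_0$. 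The analytic ingredients are the same in both proofs (strong convexity of $L$ in $h$ yielding $h = \Tcal^\star g$, and the affine-in-$g$ structure yielding $\Tcal h = r_0$), but your route avoids the detour through strong duality and the auxiliary characterization lemma. This makes the argument a touch more elementary and self-contained, at the small cost of not recording the optimal value $0.5\|h_0\|_2^2$ as an intermediate fact, which the paper's proof of Lemma~\ref{lem:saddle_point2} later reuses. Your note on the care needed with the unbounded $g$-supremum is exactly the right point to make explicit, and the converse direction is verified cleanly.
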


It is well-known that $(h',g')$ is a saddle point if and only if we have the ``strong duality'' condition\\ $ \inf_{h \in L_2(X)} \sup_{g \in L_2(Z)}L(h,g) = \sup_{g \in L_2(Z)} \inf_{h \in L_2(X)} L(h,g)$ and 
\begin{align*}
  h' \in  \argmin_{h \in L_2(X)} \sup_{g \in L_2(Z)}L(h,g),\quad   g'\in \argmax_{g \in L_2(Z)} \inf_{h \in L_2(X)} L(h,g).
\end{align*}
We provide formal proof for this in 
\pref{sec:auxiliary}. Given this equivalent characterization of the saddle point,
we can obtain the following corollary from 
 \pref{lem:saddle_point}.

\begin{corollary}\label{corollary: saddle}
{If Assumption~\ref{assum:source} holds, } then we have 
\begin{align}\label{eq:immediate_consequence}
    h_0 = \argmin_{h \in L_2(X)} \sup_{g \in L_2(Z)}L(h,g),\quad \Ncal_{h_0}(\Tcal^{\star}) = \argmax_{g \in L_2(Z)} \inf_{h \in L_2(X)} L(h,g). 
\end{align}
\end{corollary}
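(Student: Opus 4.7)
The plan is to deduce the corollary directly from \pref{lem:saddle_point} by invoking the standard convex-analytic equivalence between the set of saddle points of $L$ and the Cartesian product $\argmin_{h}\sup_{g}L \times \argmax_{g}\inf_{h}L$. I will first observe that under Assumption~\ref{assum:source} the saddle-point set is nonempty: \pref{lem:norm} guarantees the existence of some $\bar g_0\in L_2(Z)$ with $\Tcal^{\star}\bar g_0=h_0$, so $\Ncal_{h_0}(\Tcal^{\star})\neq\emptyset$, and hence by \pref{lem:saddle_point} the set $\{h_0\}\times\Ncal_{h_0}(\Tcal^{\star})$ of saddle points is nonempty.

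Next I will record the equivalence mentioned in the text: a pair $(h',g')$ is a saddle point of $L$ over $L_2(X)\times L_2(Z)$ if and only if strong duality holds, $h'\in\argmin_{h}\sup_{g}L(h,g)$, and $g'\in\argmax_{g}\inf_{h}L(h,g)$. The $(\Rightarrow)$ direction is elementary: if $(h',g')$ is a saddle point, then for every $h,g$,
\[
\sup_{g}L(h',g)=L(h',g')=\inf_{h}L(h,g'),
\]
and the chain $\sup_{g}L(h,g)\ge L(h,g')\ge L(h',g')=\sup_{g}L(h',g)$ shows $h'\in\argmin_{h}\sup_{g}L$, while the symmetric argument gives $g'\in\argmax_{g}\inf_{h}L$; together with the usual weak-duality inequality this forces strong duality. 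The $(\Leftarrow)$ direction uses strong duality to sandwich $L(h',g')$ between $\inf_{h}L(h,g')$ and $\sup_{g}L(h',g)$ and conclude the saddle inequalities. (This equivalence is the one proved in \pref{sec:auxiliary}.)

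Combining the two ingredients finishes the argument. Since the saddle-point set is exactly $\{h_0\}\times\Ncal_{h_0}(\Tcal^{\star})$ by \pref{lem:saddle_point} and it is nonempty, strong duality holds, and the projections of this set onto the two coordinates must coincide with $\argmin_{h}\sup_{g}L(h,g)$ and $\argmax_{g}\inf_{h}L(h,g)$ respectively. Projection onto the first coordinate gives $\{h_0\}$, hence $h_0=\argmin_{h\in L_2(X)}\sup_{g\in L_2(Z)}L(h,g)$; projection onto the second coordinate gives $\Ncal_{h_0}(\Tcal^{\star})=\argmax_{g\in L_2(Z)}\inf_{h\in L_2(X)}L(h,g)$, which is the claim of \Cref{eq:immediate_consequence}.

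The only mildly delicate step is justifying that every $g'\in\Ncal_{h_0}(\Tcal^{\star})$ really is an argmax of the dual (and not merely a saddle component), i.e.\ ruling out the possibility that the dual admits additional maximizers outside the saddle set. This is taken care of by the $(\Leftarrow)$ direction of the equivalence above: any candidate maximizer $g'$ of $\inf_{h}L(h,\cdot)$, paired with any $h'\in\argmin_{h}\sup_{g}L(h,g)=\{h_0\}$, forms a saddle point by strong duality, and hence must lie in $\Ncal_{h_0}(\Tcal^{\star})$ by \pref{lem:saddle_point}. This is the main point that requires care, but once the saddle-point/argmin-sup equivalence is in hand it is immediate.
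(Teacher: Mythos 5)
Your proof is correct and follows essentially the same route the paper indicates: invoke the saddle-point characterization in \pref{lem:saddle_point}, note nonemptiness via Assumption~\ref{assum:source}, and apply the argmin-sup/argmax-inf equivalence from \pref{sec:auxiliary} to read off the two coordinate projections. (The paper in fact establishes the two equalities of \Cref{eq:immediate_consequence} as the intermediate steps \eqref{eq:duality1}--\eqref{eq:duality2} inside the proof of \pref{lem:saddle_point}, so your derivation just reverses the order of that argument; both are valid and equivalent.)
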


It is worth noting that the equality for $h_0$ in \Cref{eq:immediate_consequence} holds even without the source condition.
Moreover, the strong duality $ \inf_{h \in L_2(X)} \sup_{g \in L_2(Z)}L(h,g) = \sup_{g \in L_2(Z)} \inf_{h \in L_2(X)} L(h,g)$ also holds in the absence of this source condition.
However, the source condition is important to establish the existence of $\argmax_{g \in L_2(Z)} \inf_{h \in L_2(X)} L(h,g)$ and the second statement in \eqref{eq:immediate_consequence}. Equivalently, this shows that the source condition guarantees the existence of stationary Lagrangian multipliers  for the  problem in \Cref{eq: min-norm}, and the set of stationary Lagrangian multipliers is given by  $\Ncal_{h_0}(\Tcal^{\star})$.

So far we have demonstrated that Assumption~\ref{assum:source} is a sufficient condition for the existence of saddle points. Interestingly, it is also a necessary condition for their existence.

\begin{lemma}\label{lem:saddle_point2}
Suppose Assumption~\ref{assum:exist} that $r_0 \in \Rcal(\Tcal)$ holds. Then, there exists a saddle point of $L(h,g)$ if and only if Assumption~\ref{assum:source} holds. 
\end{lemma}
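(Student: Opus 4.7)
The ``if'' direction is already delivered by Lemma~\ref{lem:saddle_point}, which explicitly exhibits the saddle points $\{h_0\}\times\Ncal_{h_0}(\Tcal^{\star})$ under Assumption~\ref{assum:source}. The substantive content is therefore the ``only if'' direction: assuming a saddle point $(h',g')\in L_2(X)\times L_2(Z)$ of $L$ exists, I will deduce $r_0\in\Rcal(\Tcal\Tcal^{\star})$ directly by extracting first-order conditions from the two saddle-point inequalities.

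\textbf{Key steps.} First, unravel the inequality $L(h',g)\le L(h',g')$ for every $g\in L_2(Z)$. Since
\[
L(h',g) = \tfrac12\langle h',h'\rangle_{L_2(X)} + \langle r_0-\Tcal h',\,g\rangle_{L_2(Z)},
\]
the map $g\mapsto L(h',g)$ is affine in $g$ and can only be bounded above on all of $L_2(Z)$ when its linear part vanishes; this forces $r_0-\Tcal h' = 0$, i.e., $\Tcal h' = r_0$. Second, unravel the inequality $L(h,g')\ge L(h',g')$ for every $h\in L_2(X)$. Here
\[
L(h,g') = \tfrac12\langle h,h\rangle_{L_2(X)} - \langle h,\Tcal^{\star}g'\rangle_{L_2(X)} + \langle r_0,g'\rangle_{L_2(Z)}
\]
is strictly convex and coercive in $h$, with unique minimizer $h = \Tcal^{\star}g'$ (as seen by completing the square or by the Gateaux derivative vanishing). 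Hence $h' = \Tcal^{\star}g'$. Combining the two conclusions gives
\[
\Tcal\Tcal^{\star}g' \;=\; \Tcal h' \;=\; r_0,
\]
which is exactly the statement $r_0\in\Rcal(\Tcal\Tcal^{\star})$, i.e., Assumption~\ref{assum:source}.

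\textbf{Main obstacle.} There is no serious technical hurdle: both steps are elementary first-order conditions on an affine and a strictly convex quadratic functional on Hilbert spaces, and unlike optimization over constrained $\Hcal,\Gcal$ one does not need to worry about interior-point issues since $L_2(X)$ and $L_2(Z)$ are unrestricted. The only subtlety worth stating carefully is why the supremum $\sup_{g\in L_2(Z)}L(h',g)$ being finite at the saddle point forces the linear functional $g\mapsto\langle r_0-\Tcal h',g\rangle$ to be identically zero; this is immediate by taking $g=t(r_0-\Tcal h')$ and letting $t\to\infty$, but I would spell it out for completeness. Together with Lemma~\ref{lem:saddle_point}, the two directions combine to yield the desired equivalence.
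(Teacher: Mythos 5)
Your proof is correct and takes a genuinely more direct route than the paper's. The paper's argument first invokes the equivalence (from its auxiliary lemma) between existence of a saddle point and existence of solutions to the inner $\min$--$\max$ plus strong duality, then identifies $\argmax_g\inf_h L$ with $\argmin_g\|\Tcal^{\star}g-h_0\|_2^2$, then shows that this projection problem has a solution iff $h_0\in\Rcal(\Tcal^{\star})+\Rcal(\Tcal^{\star})^{\perp}$, and finally uses $h_0\in\overline{\Rcal(\Tcal^{\star})}$ from Lemma~\ref{lem:minimum} to upgrade this to $h_0\in\Rcal(\Tcal^{\star})$, hence $r_0\in\Rcal(\Tcal\Tcal^{\star})$. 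You instead work straight from the two saddle-point inequalities: the inequality in $g$ forces the linear functional $g\mapsto\langle r_0-\Tcal h',g\rangle$ to vanish (so $\Tcal h'=r_0$), and the inequality in $h$, by strict convexity, forces $h'=\Tcal^{\star}g'$; composing gives $\Tcal\Tcal^{\star}g'=r_0$ immediately. Your route avoids the auxiliary duality lemma, the detour through the least-squares formulation, and the range-decomposition argument, so it is shorter and more self-contained. The paper's approach has the side benefit of explicitly tying the analysis to $h_0$ and the projection structure of Lemma~\ref{lem:minimum}, which is reused elsewhere; your approach recovers $h'=h_0$ as a trivial byproduct ($h'\in\Rcal(\Tcal^{\star})\cap\Ncal_{r_0}(\Tcal)\subseteq\overline{\Rcal(\Tcal^{\star})}\cap\Ncal_{r_0}(\Tcal)=\{h_0\}$) but does not need it for the stated equivalence. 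Both arguments are sound.
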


The above lemma is proved by first showing that the saddle point exists if and only if there exists a solution to $   \argmin_{g \in L_2(Z)}\|\Tcal^{*}g-h_0\|^2_2$. We then demonstrate that the existence of this optimization problem
is equivalent to 
 the source condition (\ref{assum:source}).
 Our 
\pref{lem:saddle_point} and \pref{lem:saddle_point2} show that the source condition is closely  related to the existence of stationary Lagrangian multipliers for the constrained optimization formulation of $h_0$.
To our knowledge, this relation is novel in the literature. 

\pref{lem:saddle_point} characterizes the saddle points over the unrestricted $L_2(X)$ and $L_2(Z)$ spaces. 
However, in practical estimation, we can only use some function classes $\Hcal \subset L_2(X),\Gcal \subset L_2(Z)$ with limited statistical complexity.
For these two restricted classes to capture some saddle points, we need them to  satisfy the following  realizability assumptions. 

\begin{assum}[Realizability of the least norm solution]\label{assum:realizability1}
    We have $h_0 \in \Hcal$. 
\end{assum}

\begin{assum}[Realizability of the stationary Lagrange multiplier]\label{assum:realizability2}
We have $\Ncal_{h_0}(\Tcal^{\star})\cap \Gcal \neq 0$. 
\end{assum}

The  realizability assumptions above require that the function classes $\Hcal$ and $\Gcal$ are well-specified, in that they  contain at least some true saddle points. In particular, Assumption~\ref{assum:realizability2} is equivalent to $h_0 \in \Tcal^{*}\Gcal$. 
In the following theorem, we further extend the saddle point characterization of $h_0$ in \Cref{corollary: saddle} to these restricted classes under these realizability conditions.

\begin{theorem}[Key identification theorem]\label{thm:idenfication}
Suppose Assumption~\ref{assum:source},\ref{assum:realizability1},\ref{assum:realizability2} hold. 
Then 
\begin{align*}
h_0=\argmin_{h \in \Hcal}\max_{g \in \Gcal}L(h,g).%
\end{align*}
\end{theorem}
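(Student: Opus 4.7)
The plan is to exhibit $(h_0,\bar g_0)$ as a saddle point of $L$ on $\Hcal\times\Gcal$, where $\bar g_0$ is any element of $\Ncal_{h_0}(\Tcal^{\star})\cap\Gcal$. Such a $\bar g_0$ exists by \Cref{assum:realizability2}, and $h_0\in\Hcal$ by \Cref{assum:realizability1}; note that \Cref{assum:source} is what ensures $\Ncal_{h_0}(\Tcal^{\star})$ is nonempty in the first place, via \Cref{lem:norm}. Once the saddle-point property holds on the restricted classes, the argmin identity follows immediately, and strict convexity of $L(\cdot,\bar g_0)$ in $h$ upgrades ``a minimizer'' to ``the unique minimizer.''

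Two short computations do all the work. First, since $\Tcal h_0=r_0$, the coupling term $\langle r_0-\Tcal h,g\rangle_{L_2(Z)}$ vanishes at $h=h_0$, so $L(h_0,g)=\tfrac{1}{2}\langle h_0,h_0\rangle_{L_2(X)}$ is constant in $g$, giving $\max_{g\in\Gcal}L(h_0,g)=\tfrac{1}{2}\|h_0\|_{L_2(X)}^2$. Second, for any $h\in\Hcal$, using the feasible choice $\bar g_0\in\Gcal$ and the adjoint identity $\langle\Tcal h,\bar g_0\rangle_{L_2(Z)}=\langle h,\Tcal^{\star}\bar g_0\rangle_{L_2(X)}=\langle h,h_0\rangle_{L_2(X)}$, together with $\langle r_0,\bar g_0\rangle_{L_2(Z)}=\langle\Tcal h_0,\bar g_0\rangle_{L_2(Z)}=\|h_0\|_{L_2(X)}^2$, one completes the square to obtain $L(h,\bar g_0)=\tfrac{1}{2}\|h-h_0\|_{L_2(X)}^2+\tfrac{1}{2}\|h_0\|_{L_2(X)}^2$.

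Assembling the pieces, for every $h\in\Hcal$ the chain $\max_{g\in\Gcal}L(h,g)\geq L(h,\bar g_0)=\tfrac{1}{2}\|h-h_0\|_{L_2(X)}^2+\tfrac{1}{2}\|h_0\|_{L_2(X)}^2\geq\tfrac{1}{2}\|h_0\|_{L_2(X)}^2=\max_{g\in\Gcal}L(h_0,g)$ holds, with the last inequality strict whenever $h\ne h_0$. This pins $h_0$ down as \emph{the} unique minimizer of $\max_{g\in\Gcal}L(h,g)$ over $\Hcal$.

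There is no substantive obstacle; the argument is essentially a direct verification once the right Lagrange multiplier is in hand. All the conceptual content is supplied by the preceding results: \Cref{assum:source} combined with \Cref{lem:norm} produces a $\bar g_0$ with $\Tcal^{\star}\bar g_0=h_0$, and the realizability assumptions merely demand that $\Hcal\ni h_0$ and that $\Gcal$ contains at least one such $\bar g_0$. Once both are secured, the Lagrangian collapses to a completed square in $h-h_0$, which simultaneously supplies the lower bound, matches it at $h_0$, and enforces uniqueness.
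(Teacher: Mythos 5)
Your argument is correct and matches the paper's second (direct) proof: both fix $\bar g_0\in\Ncal_{h_0}(\Tcal^{\star})\cap\Gcal$ and use $\Tcal^{\star}\bar g_0=h_0$ to establish $L(h,\bar g_0)-L(h_0,\bar g_0)=\tfrac12\|h-h_0\|_{L_2(X)}^2$, then chain inequalities through this feasible multiplier. Your completed-square form and the explicit observation that $L(h_0,\cdot)$ is constant in $g$ streamline the write-up slightly, but the mechanism is identical to the paper's.
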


\Cref{thm:idenfication} shows that under the source condition and the realizability assumptions, the min-max optimization of our proposed objective over the function classes $\Hcal, \Gcal$ can recover the saddle points in the classes.
At a high level, the proof of this theorem works by showing: (1) saddle points over the original class remain saddle points over the restricted classes; (2) any additional saddle points over the restricted classes are best-responses to saddle points over the original class; and (3) $h_0$ is a unique best response to any $\bar g \in \Ncal_{h_0}(\Tcal^{\star})$ as a result of  strong convexity of $L(h,g)$ in $h$ induced by $\langle h,h\rangle_{L_2(X)}$.
See \pref{sec:general} for details.

\section{Finite Sample Guarantees}\label{sec:finite}

As discussed in \pref{sec:primary_identification}, our proposed minimax optimization formulation can identify the target least norm solution $h_0$ when the population distribution is known. 
In this section, we further show that our finite-sample estimator $\hat h_{\mn}$ in \Cref{eq:motivatioin3} converges to $h_0$, and we derive its $L_2$ error rate. 

\begin{theorem}[$L_2$ convergence rates]\label{thm:main}
Suppose Assumption~\ref{assum:source},\ref{assum:realizability1},\ref{assum:realizability2} hold. Then, we have 
\begin{align*}
    \| \hat h_{\mn}-h_0\|_2 &\leq   \sqrt{2\sup_{h\in \Hcal,g\in \Gcal } \Big|(\EE_n-\EE)[\prns{Y-h(X)}g(Z) + 0.5 h(X)^2] \Big| }. %
\end{align*}    
\end{theorem}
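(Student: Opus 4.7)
The plan is to exploit the quadratic structure of $L(\cdot, \bar g_0)$ around $h_0$, inherited from the Tikhonov-style penalty $\tfrac{1}{2}\langle h, h\rangle_{L_2(X)}$, together with the fact that $L(h_0, \cdot)$ is constant in $g$. These two facts let me convert the min-max optimality of $\hat h_{\mn}$ on the empirical objective into an $L_2$ bound on $\hat h_{\mn} - h_0$, where the empirical-to-population slack is captured exactly by the uniform deviation on the right-hand side of the theorem.

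Fix any $\bar g_0 \in \Ncal_{h_0}(\Tcal^{\star}) \cap \Gcal$; such a $\bar g_0$ exists by \pref{assum:realizability2}, and by \pref{lem:norm} satisfies $\Tcal^{\star} \bar g_0 = h_0$. Expanding the inner products in $L$ and using $\Tcal h_0 = r_0$ then yields two key algebraic identities:
\begin{align*}
L(h, \bar g_0) - L(h_0, \bar g_0) = \tfrac{1}{2}\|h - h_0\|_{L_2(X)}^2 \text{ for all } h \in L_2(X), \qquad L(h_0, g) = \tfrac{1}{2}\|h_0\|_{L_2(X)}^2 \text{ for all } g \in L_2(Z).
\end{align*}
The first is the strong convexity of $L(\cdot, \bar g_0)$ pinned at the saddle point; the second reflects that $\bar g_0$ attains $\max_g L(h_0, g)$ and that this value is constant in $g$.

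Write $\epsilon_n \coloneqq \sup_{h \in \Hcal, g \in \Gcal} |(\EE_n - \EE)[(Y - h(X))g(Z) + 0.5\, h(X)^2]|$ for the deviation appearing under the square root. I would then chain three elementary inequalities. (i) By the min-max definition of $\hat h_{\mn}$ in \pref{eq:motivatioin3}, together with $h_0 \in \Hcal$ (\pref{assum:realizability1}) and $\bar g_0 \in \Gcal$, one has $L_n(\hat h_{\mn}, \bar g_0) \le \max_{g \in \Gcal} L_n(\hat h_{\mn}, g) \le \max_{g \in \Gcal} L_n(h_0, g)$. (ii) Since $L(h_0, g)$ is constant in $g$, I can recenter the inner max: $\max_{g \in \Gcal} L_n(h_0, g) - \tfrac{1}{2}\|h_0\|_{L_2(X)}^2 = \max_{g \in \Gcal}(L_n - L)(h_0, g) \le \epsilon_n$. (iii) The same deviation evaluated at the single pair $(\hat h_{\mn}, \bar g_0) \in \Hcal \times \Gcal$ gives $L(\hat h_{\mn}, \bar g_0) - L_n(\hat h_{\mn}, \bar g_0) \le \epsilon_n$. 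Summing and invoking the quadratic identity produces $\tfrac{1}{2}\|\hat h_{\mn} - h_0\|_{L_2(X)}^2 = L(\hat h_{\mn}, \bar g_0) - L(h_0, \bar g_0) \le 2\epsilon_n$, after which taking square roots yields the stated $L_2$ bound up to the constant inside the radical.

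The crux, and the main delicacy in getting a sharp constant, is arranging the decomposition so that the two empirical–population slacks are absorbed into a \emph{single} uniform sup $\epsilon_n$ rather than two separate suprema over $g$ (one for $h_0$, one for $\hat h_{\mn}$). The constancy of $L(h_0, \cdot)$ is exactly what makes step (ii) work: subtracting $L(h_0, g)$ inside the sup over $g$ turns $\max_g L_n(h_0, g)$ into a recentered empirical process of the form already controlled by $\epsilon_n$. After that, the remaining argument is purely the population-level strong convexity at the saddle point guaranteed by \pref{thm:idenfication}, transferred to the empirical minimizer via uniform deviation; notably, no closedness assumption on $\Tcal$ is used anywhere, consistent with the central message of the paper.
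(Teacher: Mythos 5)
Your proposal is correct and takes essentially the same route as the paper: fix $\bar g_0 \in \Ncal_{h_0}(\Tcal^{\star}) \cap \Gcal$, exploit the quadratic identity $L(h,\bar g_0)-L(h_0,\bar g_0)=\tfrac12\|h-h_0\|_2^2$, use the min-max definition of $\hat h_{\mn}$, the constancy of $L(h_0,\cdot)$ in $g$, and two applications of the uniform deviation $\epsilon_n$ to obtain $L(\hat h_{\mn},\bar g_0)-L(h_0,\bar g_0)\le 2\epsilon_n$. The paper routes the second deviation application through $\hat g(h_0)=\argmax_g L_n(h_0,g)$ (calling $L(h_0,\bar g_0)\ge L(h_0,\hat g(h_0))$ a ``saddle point property''), whereas you recenter the supremum directly via $\max_g L_n(h_0,g)-L(h_0,\bar g_0)=\max_g(L_n-L)(h_0,g)$; these are the same move written two ways.

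One remark on the constant: your careful bookkeeping gives $\tfrac12\|\hat h_{\mn}-h_0\|_2^2\le 2\epsilon_n$, i.e.\ $\|\hat h_{\mn}-h_0\|_2\le 2\sqrt{\epsilon_n}$, which is correct. The theorem's $\sqrt{2\epsilon_n}$ traces to the paper stating its key identity \eqref{eq:key} as $\|h-h_0\|_2^2 = L(h,\bar g_0)-L(h_0,\bar g_0)$, whereas the derivation in the proof of \pref{thm:idenfication} actually yields the factor $\tfrac12$ on the left-hand side. Your version has the factor right; the slight discrepancy with the stated constant is a typo in the paper, not a gap in your argument.
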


Notably, the assumptions required in  \pref{thm:main} are identical to those in \Cref{thm:idenfication}. 
In particular, both theorems only require that the function classes $\Hcal$ and $\Qcal$ satisfy the realizability conditions Assumption~\ref{assum:realizability1},\ref{assum:realizability2}. 
Realizability is a fundamental assumption in statistical learning theory. For instance,
realizability is a standard assumption in least squares regression problems \citep[\eg,][]{wainwright2019high}.
To the best of our knowledge, existing minimax IV regression  estimators additionally require much stronger conditions such as $\Tcal \Hcal \subset \Gcal$ or $\Tcal^{*}\Gcal \subset \Hcal$. 
These conditions are often referred to as the closedness condition, and they impose additional restrictions on the operator $\Tcal$. See  \Cref{sec:discussion} for a detailed discussion.  

It then remains to bound the right-hand side term in \pref{thm:main}. 
This is an empirical process term, which can be easily upper-bounded by invoking standard statistical learning theory for any reasonable function classes $\Hcal,\Gcal$ with bounded statistical complexities. 
In particular, we can use 
standard symmetrization arguments to bound the right-hand side of \pref{thm:main} with the Rademacher complexities of $\Hcal, \Gcal$. 
The Rademacher complexity $\mathfrak{R}_n(\Hcal)$ of class $\Hcal$ is defined as $\mathfrak{R}_n(\Hcal) = n^{-1} \EE[\sup_{h\in \Hcal}\sum_{i=1}^n \sigma_i h(X_i)]$ 
 where $\{\sigma_1, \dots, \sigma_n\}$ are independent random variables drawn from the Rademacher distribution.
The Rademacher complexity  $\mathfrak{R}_n(\Gcal)$ of class $\Gcal$ can be defined analogously.

\begin{corollary}\label{cor:rademacher}
Suppose Assumption~\ref{assum:source},\ref{assum:realizability1},\ref{assum:realizability2} hold. Let $\|Y\|\leq C_{Y}, \|h\|_{\infty}\leq C_{\Hcal}$ for any $h \in \Hcal$ and $\|g\|_{\infty}\leq C_{\Gcal}$ for $g\in \Gcal$. %
Then, there exists a universal positive constant $c$ such that with probability at least $1-\delta$, we have 
\begin{align*}
    \| \hat h_{\mn}-h_0\|_2 \leq  c\sqrt{ (C_{\Hcal} + C_{\Gcal})(\mathfrak{R}_n(\Gcal) +\mathfrak{R}_n(\Hcal))+ (C_{\Gcal}+C_{\Hcal})C_{\Hcal}\sqrt{\ln(1/\delta)/n} }   
\end{align*}
\end{corollary}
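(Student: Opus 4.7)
The plan is to start from the bound in \pref{thm:main}, which reduces the problem to uniformly controlling the empirical process
\begin{align*}
    \Psi_n \;\coloneqq\; \sup_{h\in\Hcal,\,g\in\Gcal} \bigl|(\EE_n-\EE)\bigl[(Y-h(X))g(Z)+\tfrac{1}{2}h(X)^2\bigr]\bigr|.
\end{align*}
So the corollary will follow once I show that, with probability at least $1-\delta$,
\begin{align*}
    \Psi_n \;\lesssim\; (C_{\Hcal}+C_{\Gcal})\bigl(\mathfrak{R}_n(\Hcal)+\mathfrak{R}_n(\Gcal)\bigr) + (C_{\Gcal}+C_{\Hcal})C_{\Hcal}\sqrt{\ln(1/\delta)/n},
\end{align*}
after which one substitutes into the square-root bound of \pref{thm:main}.

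First I would split the integrand as $(Y-h(X))g(Z) + \tfrac{1}{2}h(X)^2$, handling the two pieces separately. Both summands are uniformly bounded (by roughly $(C_Y+C_{\Hcal})C_{\Gcal}$ and $\tfrac{1}{2}C_{\Hcal}^2$ respectively), so I can apply McDiarmid's bounded-differences inequality to $\Psi_n$ to get that $\Psi_n \le \EE[\Psi_n] + O\bigl((C_{\Gcal}+C_{\Hcal})C_{\Hcal}\sqrt{\ln(1/\delta)/n}\bigr)$, which accounts for the deviation term in the stated bound (absorbing $C_Y$ into the universal constant $c$ using $C_Y\le $ const).

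Next I would bound $\EE[\Psi_n]$ by standard symmetrization, reducing to the Rademacher complexity of the loss class $\Fcal = \{(x,y,z)\mapsto (y-h(x))g(z) + \tfrac{1}{2}h(x)^2: h\in\Hcal,g\in\Gcal\}$. Subadditivity of $\mathfrak{R}_n$ lets me handle the two summands separately. For the quadratic term $h(X)^2$, since $t\mapsto t^2$ is $2C_{\Hcal}$-Lipschitz on $[-C_{\Hcal},C_{\Hcal}]$, Talagrand's contraction principle gives $\mathfrak{R}_n(\{h^2:h\in\Hcal\}) \le 2C_{\Hcal}\mathfrak{R}_n(\Hcal)$. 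For $(Y-h(X))g(Z)$, decompose as $Yg(Z) - h(X)g(Z)$: the first piece gives $\mathfrak{R}_n(Y\cdot\Gcal)\le C_Y\mathfrak{R}_n(\Gcal)$ by contraction.

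The main technical step is the product class $\Hcal\cdot\Gcal=\{h(X)g(Z):h\in\Hcal,g\in\Gcal\}$. Using the standard ``polarization'' identity $hg = \tfrac{1}{4}((h+g)^2-(h-g)^2)$ combined with contraction, or more directly the textbook bound
\begin{align*}
    \mathfrak{R}_n(\Hcal\cdot\Gcal) \;\lesssim\; C_{\Gcal}\mathfrak{R}_n(\Hcal) + C_{\Hcal}\mathfrak{R}_n(\Gcal),
\end{align*}
lets me conclude. This product-class bound is the one place where care is required, but it is a standard result (e.g., via the symmetrized chaining argument in Pollard or the ``peeling'' decomposition for products of uniformly bounded classes). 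Collecting terms gives $\EE[\Psi_n]\lesssim (C_{\Hcal}+C_{\Gcal})(\mathfrak{R}_n(\Hcal)+\mathfrak{R}_n(\Gcal))$, which combined with the McDiarmid deviation and then plugged into \pref{thm:main} yields the stated high-probability bound with a universal constant $c$.
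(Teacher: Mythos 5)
Your proposal follows essentially the same route as the paper's proof: start from \pref{thm:main}, apply a bounded-differences/symmetrization argument (the paper invokes Wainwright's Theorem~4.10, which packages McDiarmid plus symmetrization) to control the empirical process by Rademacher complexities plus a $\sqrt{\ln(1/\delta)/n}$ deviation, split into the three loss sub-classes $\{yg(z)\}$, $\{h(x)g(z)\}$, $\{h(x)^2\}$, and bound each via Talagrand contraction, with the product class handled by the polarization-plus-contraction argument (the paper cites Kallus et al.\ for this step). The only cosmetic difference is that you spell out the ingredients (McDiarmid, contraction, polarization) explicitly whereas the paper cites them; the bookkeeping of $C_Y$ into the universal constant is handled informally in both cases.
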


Furthermore, for given function classes $\Hcal, \Qcal$, we can obtain final $L_2$ convergence rates by plugging in  off-the-shelf results of Rademacher complexities. For example, the following corollary is obtained by instantiating \pref{thm:main} to finite classes.

\begin{corollary}\label{cor:iv2}
When $\Hcal,\Gcal$ are finite classes, with probability at least $1-\delta$, we have $
         \| \hat h_{\mn} -h_0\|_2= \mathrm{Poly}(C_{\Hcal},C_{\Gcal})\prns{\frac{\ln\prns{|\Hcal||\Gcal|/\delta}}{n}}^{1/4}$
    where $\mathrm{Poly}(C_{\Hcal},C_{\Gcal})$ is a polynomial term in $C_{\Hcal}$ and $C_{\Gcal}$. 
\end{corollary}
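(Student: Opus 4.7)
The plan is to invoke \pref{thm:main}, which already gives the deterministic bound
\begin{align*}
\|\hat h_{\mn} - h_0\|_2 \;\leq\; \sqrt{2\,\Psi_n},\qquad \Psi_n \;\coloneqq\; \sup_{h \in \Hcal,\, g \in \Gcal} \Bigl|(\EE_n - \EE)\bigl[(Y - h(X))g(Z) + 0.5\, h(X)^2\bigr]\Bigr|,
\end{align*}
so that the whole task reduces to controlling the empirical process $\Psi_n$. Because $\Hcal$ and $\Gcal$ are finite, the supremum ranges over only $|\Hcal|\cdot|\Gcal|$ pairs, and a pointwise Hoeffding bound combined with a union bound will directly yield the claimed $n^{-1/4}$ rate once we take the outer square root in \pref{thm:main}.

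Concretely, fix $(h,g) \in \Hcal \times \Gcal$ and define the per-sample score
\begin{align*}
W_i(h,g) \;=\; \bigl(Y_i - h(X_i)\bigr) g(Z_i) \;+\; 0.5\, h(X_i)^2.
\end{align*}
Using $|Y| \leq C_Y$, $\|h\|_\infty \leq C_{\Hcal}$, $\|g\|_\infty \leq C_{\Gcal}$, one checks that $|W_i(h,g)| \leq B$ almost surely for $B \coloneqq (C_Y + C_{\Hcal})C_{\Gcal} + 0.5\,C_{\Hcal}^2$, a polynomial in $C_{\Hcal}$ and $C_{\Gcal}$. Hoeffding's inequality then gives, for each fixed $(h,g)$,
\begin{align*}
\Pr\bigl(\bigl|(\EE_n - \EE)[W_i(h,g)]\bigr| > t\bigr) \;\leq\; 2\exp\!\bigl(-n t^2/(2 B^2)\bigr),
\end{align*}
and a union bound over the $|\Hcal|\cdot|\Gcal|$ pairs, followed by inverting the tail, yields with probability at least $1-\delta$,
\begin{align*}
\Psi_n \;\leq\; B\sqrt{\tfrac{2\ln\!\bigl(2|\Hcal||\Gcal|/\delta\bigr)}{n}}.
\end{align*}

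Substituting into the bound from \pref{thm:main} gives $\|\hat h_{\mn} - h_0\|_2 \leq \sqrt{2\,\Psi_n}$, which scales as $\sqrt{B}\cdot \bigl(\ln(|\Hcal||\Gcal|/\delta)/n\bigr)^{1/4}$, establishing the corollary with the $\mathrm{Poly}(C_{\Hcal},C_{\Gcal})$ prefactor equal to $\sqrt{B}$ up to universal constants. There is no real obstacle; the only care required is to verify the uniform boundedness of $W_i(h,g)$ so that Hoeffding applies, after which the $n^{-1/2}$ concentration of the inner empirical process becomes $n^{-1/4}$ on the $L_2$ scale because of the square root in \pref{thm:main}. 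As an equivalent route, one can instead start from \Cref{cor:rademacher} and invoke Massart's finite-class lemma, $\mathfrak{R}_n(\Hcal) \leq C_{\Hcal}\sqrt{2\ln|\Hcal|/n}$ (and analogously for $\Gcal$), to obtain the same conclusion with the same polynomial dependence.
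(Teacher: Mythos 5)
Your proposal is correct and follows essentially the same path as the paper: invoke \pref{thm:main} to reduce everything to controlling the empirical process $\Psi_n$ over the finitely many pairs $(h,g)\in\Hcal\times\Gcal$, then concentrate. Your primary route (Hoeffding plus a union bound) is a slightly more elementary way to get $\Psi_n \lesssim B\sqrt{\ln(|\Hcal||\Gcal|/\delta)/n}$ than the paper's suggested route of plugging Massart's finite-class bound into \Cref{cor:rademacher}, but you correctly note the two are equivalent, and both produce the stated $n^{-1/4}$ rate after the outer square root.
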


As another example, we instantiate \Cref{thm:main} for more general nonparametric classes whose complexity are characterized by their covering numbers.

\begin{corollary}\label{cor:iv}
Let $M(\epsilon,\Hcal,\|\cdot\|_{\infty})$ and $M(\epsilon,\Gcal,\|\cdot\|_{\infty})$ be covering numbers of $\Hcal,\Gcal$ with respect to $L^{\infty}$-norm.  Suppose 
$\ln(M(\epsilon,\Hcal,\|\cdot\|_{\infty}))=O(\epsilon^{-\beta})$ and $\ln(M(\epsilon,\Gcal,\|\cdot\|_{\infty}))=  O(\epsilon^{-\beta})$ for some $\beta>0$, and the conditions of in \Cref{cor:rademacher} hold. 
Then with probability at least $1-\delta$, we have
\begin{align*}
    \| \hat h_{\mn} -h_0\|_2 =  \begin{cases}
       \mathrm{Poly}(C_{\Hcal},C_{\Gcal})\{n^{-1/4}  + (\ln(1/\delta)/n)^{1/4}\}, \quad (\beta<2) \\ 
              \mathrm{Poly}(C_{\Hcal},C_{\Gcal})\{n^{-1/4}\ln(n)  + (\ln(1/\delta)/n)^{1/4}\},  \quad (\beta=2)\\ 
       \mathrm{Poly}(C_{\Hcal},C_{\Gcal})\{n^{-1/(2\beta)}  + (\ln(1/\delta)/n)^{1/4}\} \quad (\beta>2).
    \end{cases}
\end{align*}    

\end{corollary}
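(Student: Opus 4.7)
The plan is to start from Corollary~\ref{cor:rademacher}, which already bounds $\|\hat h_{\mn} - h_0\|_2$ in terms of the Rademacher complexities $\mathfrak{R}_n(\Hcal), \mathfrak{R}_n(\Gcal)$ together with a $\sqrt{\ln(1/\delta)/n}$ concentration term sitting inside an outer square root. Applying the elementary inequality $\sqrt{a+b}\le\sqrt{a}+\sqrt{b}$ separates the bound into $\sqrt{\mathfrak{R}_n(\Hcal)+\mathfrak{R}_n(\Gcal)}$ plus a $(\ln(1/\delta)/n)^{1/4}$ tail, so it only remains to upper bound the two Rademacher complexities using the assumed covering-number growth $\ln M(\epsilon,\Hcal,\|\cdot\|_\infty)=O(\epsilon^{-\beta})$ and $\ln M(\epsilon,\Gcal,\|\cdot\|_\infty)=O(\epsilon^{-\beta})$, and then take a square root.

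For the Rademacher bounds I would apply Dudley's entropy integral. Since $L^2(P_n)$-distance is dominated by $\|\cdot\|_\infty$, the $L^\infty$-covering number upper bounds the empirical $L^2$-covering number, and hence for each class $\mathcal{F}\in\{\Hcal,\Gcal\}$ with uniform bound $C_{\mathcal F}\in\{C_\Hcal,C_\Gcal\}$,
\begin{align*}
\mathfrak{R}_n(\mathcal{F})\;\lesssim\; \inf_{\alpha\in[0,C_{\mathcal F}]}\left\{\alpha + \frac{1}{\sqrt{n}}\int_{\alpha}^{C_{\mathcal F}}\sqrt{\ln M(\epsilon,\mathcal{F},\|\cdot\|_\infty)}\,d\epsilon\right\},
\end{align*}
with a universal multiplicative constant. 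Substituting $\sqrt{\ln M(\epsilon,\mathcal{F},\|\cdot\|_\infty)} \lesssim \epsilon^{-\beta/2}$ reduces the problem to optimizing $\alpha + n^{-1/2}\int_\alpha^{C_{\mathcal F}}\epsilon^{-\beta/2}\,d\epsilon$ over $\alpha\ge 0$.

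I would then split into three regimes. When $\beta<2$, the integral converges at the origin, so taking $\alpha = 0$ yields $\mathfrak{R}_n(\mathcal F)\lesssim n^{-1/2}$ and, through the outer square root from Corollary~\ref{cor:rademacher}, the $n^{-1/4}$ rate. When $\beta=2$, the integral equals $\ln(C_{\mathcal F}/\alpha)$, and balancing $\alpha + n^{-1/2}\ln(C_{\mathcal F}/\alpha)$ (taking $\alpha^\star \sim n^{-1/2}$ up to a logarithm) produces $\mathfrak{R}_n(\mathcal F)\lesssim n^{-1/2}\ln n$; after the square root this yields $n^{-1/4}\sqrt{\ln n}$, which is subsumed by the stated $n^{-1/4}\ln n$. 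When $\beta>2$, the integral scales like $\alpha^{1-\beta/2}$; differentiating $\alpha + n^{-1/2}\alpha^{1-\beta/2}$ gives $\alpha^\star\sim n^{-1/\beta}$ and $\mathfrak{R}_n(\mathcal F)\lesssim n^{-1/\beta}$, which after the square root becomes the $n^{-1/(2\beta)}$ rate.

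The main obstacle is the $\beta>2$ regime, where Dudley's integral diverges at the origin and the truncation level $\alpha$ must be chosen carefully to balance the bias term $\alpha$ against the chaining term $n^{-1/2}\alpha^{1-\beta/2}$; here one must also verify that the multiplicative constants inherited from the Dudley bound and from Corollary~\ref{cor:rademacher} depend only polynomially on $C_\Hcal, C_\Gcal$, so that they can be absorbed into the $\mathrm{Poly}(C_\Hcal,C_\Gcal)$ factor. The other two regimes amount to routine bookkeeping, and the additive $(\ln(1/\delta)/n)^{1/4}$ tail in the statement is inherited directly from the corresponding $\sqrt{\ln(1/\delta)/n}$ term in Corollary~\ref{cor:rademacher} through the outer square root.
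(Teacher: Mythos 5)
Your proposal is correct and follows the same route as the paper, which simply invokes the Dudley entropy integral (Theorem~\ref{thm:dudley}) to bound the Rademacher complexities appearing in Corollary~\ref{cor:rademacher} and then splits on $\beta$; your case analysis and the optimization of the truncation level in the $\beta>2$ regime carry out exactly the calculation the paper leaves implicit. (As you observe, the $\beta=2$ case actually yields the slightly sharper $n^{-1/4}\sqrt{\ln n}$, which the paper's weaker stated bound subsumes.)
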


If we specialize \Cref{cor:iv} to Sobolev balls $\Hcal,\Gcal$ with smoothness parameter $\alpha$ and input dimension $d$,
we have $\beta = d/\alpha$, so 
 the rates become $O(n^{-1/4})$ when $\alpha/d > 2$ and $O(n^{-\alpha/(2d)})$ when $\alpha/d \leq 2$. 
It is an interesting question whether this rate is optimal. 
Although \citet{chen2011rate} derives a minimax rate for NPIV regression estimation, their result requires the NIPV equation to have a unique solution and they impose stronger conditions on the function classes, so it is not directly comparable to our rate.  
 A thorough investigation of the rate optimality is left for future work.

Finally, we also consider the case where the function classes $\Hcal, \Gcal$ are misspecified so they may not satisfy the realizability assumptions. This result is useful when we use sieve estimators based on  sample-dependent function classes $\Hcal$ and $\Gcal$, that approximate certain function spaces. For example, $\Hcal, \Gcal$ can be linear models with polynomial basis functions or neural networks with growing dimensions, which can gradually approach H\"older or Sobolev balls \citep{chen2007large}. 

\begin{theorem}[Finite sample result under misspecification]\label{thm:misspecification}
Suppose Assumption~\ref{assum:source} holds, and there exists $h^\dagger \in \Hcal$ and $g^\dagger \in \Gcal$ such that $\|h^\dagger - h_0\|_2 \leq \epsilon_h$ and $\inf_{\bar g_0 \in \Ncal_{h_0}(\Tcal^\star)} \|g^\dagger - \bar g_0\|_2 \leq \epsilon_g$. 
Then
\begin{align*}
   \|\hat h_{\mn}-h_0\|_2 \leq \sqrt{  \{2C_{\Hcal}+C_{\Gcal}\}\epsilon_{h} + C_{\Hcal}\epsilon_{g} + 2\sup_{h\in \Hcal,g\in \Gcal }|(\EE_n-\EE)[\prns{Y-h(X)}g(Z) + 0.5 h(X)^2]|}. 
\end{align*}

\end{theorem}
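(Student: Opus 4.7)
The strategy is to mimic the clean proof of \pref{thm:main} while tracking the two additional bias terms that arise because $h_0$ and the dual witness $\bar g_0$ are no longer exactly in $\Hcal$ and $\Gcal$. Fix any $\bar g_0 \in \Ncal_{h_0}(\Tcal^{\star})$ attaining $\|\bar g_0 - g^\dagger\|_2 \leq \epsilon_g$, and let $\epsilon_n \coloneqq \sup_{h \in \Hcal, g \in \Gcal}|(\EE_n - \EE)[(Y - h(X))g(Z) + 0.5 h(X)^2]|$, so that $|L_n(h,g) - L(h,g)| \leq \epsilon_n$ uniformly on $\Hcal \times \Gcal$. The key pivot is $L(h_0, \bar g_0) = 0.5\|h_0\|_2^2$, against which I will derive matching upper and lower bounds on $L(\hat h_{\mn}, g^\dagger)$.

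\textbf{Upper bound on $L(\hat h_{\mn}, g^\dagger)$.} By empirical minimax optimality of $\hat h_{\mn}$ combined with $h^\dagger \in \Hcal$ and $g^\dagger \in \Gcal$,
\begin{align*}
L_n(\hat h_{\mn}, g^\dagger) \leq \max_{g \in \Gcal} L_n(\hat h_{\mn}, g) \leq \max_{g \in \Gcal} L_n(h^\dagger, g) = L_n(h^\dagger, g^\star)
\end{align*}
for some $g^\star \in \Gcal$; uniform deviation costs $2\epsilon_n$ when passing to the population. I then expand $L(h^\dagger, g^\star) - L(h_0, \bar g_0)$ using $\Tcal h_0 = r_0$ and $h_0 = \Tcal^{\star}\bar g_0$, splitting it into the quadratic piece $0.5(\|h^\dagger\|_2^2 - \|h_0\|_2^2) = 0.5 \langle h^\dagger - h_0, h^\dagger + h_0\rangle$ and the cross-term $-\langle h^\dagger - h_0, \Tcal^{\star} g^\star\rangle$. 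Cauchy--Schwarz with $\|h^\dagger\|_2 \leq C_{\Hcal}$ and $\|h_0\|_2 \leq C_{\Hcal}+\epsilon_h$ bounds the first by $2C_{\Hcal}\epsilon_h$, and Jensen together with $\|\Tcal^{\star}g^\star\|_2 \leq \|g^\star\|_2 \leq C_{\Gcal}$ bounds the second by $C_{\Gcal}\epsilon_h$. Thus $L(\hat h_{\mn}, g^\dagger) \leq L(h_0, \bar g_0) + (2C_{\Hcal}+C_{\Gcal})\epsilon_h + 2\epsilon_n$.

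\textbf{Lower bound via strong convexity.} Because $h_0 = \Tcal^{\star}\bar g_0$, the quadratic $L(\cdot,\bar g_0)$ is $1$-strongly convex with unique minimizer $h_0$, so $L(\hat h_{\mn}, \bar g_0) - L(h_0, \bar g_0) = 0.5\|\hat h_{\mn}-h_0\|_2^2$. Switching the dual variable from $\bar g_0$ to $g^\dagger$ introduces only the cross-term $\langle \Tcal(\hat h_{\mn}-h_0), \bar g_0 - g^\dagger\rangle$ (noting $L(h_0, g^\dagger) = L(h_0, \bar g_0)$ since $\Tcal h_0 = r_0$), and its absolute value is at most $C_{\Hcal}\epsilon_g$ by Cauchy--Schwarz in $L_2(Z)$ together with the bound $\|\Tcal(\hat h_{\mn}-h_0)\|_2 \leq C_{\Hcal}$ obtained from $\|\Tcal\hat h_{\mn}\|_\infty \leq \|\hat h_{\mn}\|_\infty \leq C_{\Hcal}$ and $\Tcal h_0 = r_0$. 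Chaining the two bounds yields
\begin{align*}
0.5\|\hat h_{\mn}-h_0\|_2^2 \leq (2C_{\Hcal}+C_{\Gcal})\epsilon_h + C_{\Hcal}\epsilon_g + 2\epsilon_n,
\end{align*}
and the stated inequality follows by taking square roots (absorbing constants).

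\textbf{Main obstacle.} The delicate point is the lower-bound cross-term: since $h_0 \notin \Hcal$ under misspecification, no direct sup-norm bound on $h_0$ is available, so $\|\Tcal(\hat h_{\mn}-h_0)\|_2$ must be controlled without appealing to any a priori bound on $h_0$. The argument must instead exploit that $\Tcal h_0 = r_0 = \EE[Y\mid Z]$ is pinned down by the observable outcomes (hence bounded through $\|Y\|_\infty$) and combine this with the hypothesis-class bound $\|\hat h_{\mn}\|_\infty \leq C_{\Hcal}$. Once this bookkeeping is set up, the remaining steps are routine adaptations of the well-specified proof of \pref{thm:main}.
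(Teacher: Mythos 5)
Your proof is correct and takes essentially the same route as the paper's. Both arguments pivot on the identity $0.5\|\hat h_{\mn}-h_0\|_2^2 = L(\hat h_{\mn},\bar g_0) - L(h_0,\bar g_0)$ that follows from $\Tcal^{\star}\bar g_0=h_0$, and both decompose the right-hand side into a term controlled by empirical minimax optimality plus uniform concentration (giving $2\epsilon_n$), a term from swapping $h_0$ for $h^\dagger$ (giving $(2C_\Hcal+C_\Gcal)\epsilon_h$), and a term from swapping $\bar g_0$ for $g^\dagger$ (giving $O(C)\,\epsilon_g$); you simply bundle the paper's steps (a) and (c) into a single upper bound on $L(\hat h_{\mn},g^\dagger)$. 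On the one point you flag as delicate: you are right that $\|\Tcal(\hat h_{\mn}-h_0)\|_2\leq C_\Hcal$ does not follow from $\|\hat h_{\mn}\|_\infty\leq C_\Hcal$ alone, since $\|\Tcal h_0\|_2 = \|r_0\|_2$ is controlled by $\|Y\|_\infty$ rather than by $C_\Hcal$; the clean bound is $\|\Tcal(h_0-\hat h_{\mn})\|_2\leq\|h_0-\hat h_{\mn}\|_2\leq 2C_\Hcal+\epsilon_h$ via nonexpansiveness of $\Tcal$. The paper's own treatment of this term (its step (f), claiming $\|\hat h\|_2\,\|\bar g_0-g^\dagger\|_2$) is loose in the same way, as it implicitly drops the contribution of $r_0$ in $\EE[(\bar g_0 - g^\dagger)(Z)(Y-\hat h(X))]$, so this is a shared constant imprecision rather than a gap in your strategy.
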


Compared to \Cref{thm:main}, the upper bound in \Cref{thm:misspecification} involves additional misspecification errors $\epsilon_h, \epsilon_g$ due to misspecified $\Hcal, \Gcal$. The empirical process term in \Cref{thm:misspecification} can be again bounded by Rademacher complexities.

\section{Discussions}\label{sec:discussion}

In this section, we compare our method with existing minimax NPIV estimators in \citet{dikkala2020minimax,LiaoLuofeng2020PENE,bennett2022inference} as they are most relevant. 
Other existing minimax estimators are similar so we only briefly review them in  
Section~\ref{subsec:related}.

\subsection{Comparisons to \citet{dikkala2020minimax}}\label{sec:dikkala}

\citet{dikkala2020minimax}  considers the following minimax estimator:
\begin{align*}
    \hat h_{\pro}=\argmin_{h\in \Hcal}\max_{g\in \Gcal}\tilde L_n(h,g),\quad \tilde L_n(h,g)\coloneqq -0.5\EE_n[g^2(Z)] + \EE_n[\prns{Y-h(X)}g(Z)]. 
\end{align*}
Here for simplicity, we omit possible additional regularizers for $h$ and $g$. 
 
\citet{dikkala2020minimax}   assumes the closedness condition that $\Tcal(\Hcal-h^{\diamond}_0)\subset \Gcal$ where  
$h^{\diamond}_0$ can be an arbitrary solution to $\Tcal h = r_0$ (note this condition is invariant to the choice of $h^{\diamond}_0$). Under this condition, letting $L(h,g)$ be the population analog of $\tilde L_n(h,g)$, it can be shown that $\max_{g \in \Gcal}\EE[{\tilde L(h,g)}] = 0.5 \EE[(\Tcal(h^{\diamond}_0-h)[Z])^2] = 0.5 \EE[{\prns{[\Tcal h](Z) - r_0(Z)}^2}]$. 
In other words, the minimax objective in \citet{dikkala2020minimax} is used to approximate the projected MSE objective under the closedness condition. 
In contrast, our proposed minimax objective is motivated by the method of Lagrange multipliers, and it does not need the closedness condition. 

To compare the theory in \citet{dikkala2020minimax} with our theory, we consider finite classes $\Hcal,\Gcal$ for simplicity. 
Then the theory in \citet{dikkala2020minimax} implies that if $\Ncal_{r_0}(\Tcal) \cap \Hcal \neq 0$ and $\Tcal (\Hcal-h^{\diamond}_0)\subset \Gcal$ for $h^{\diamond}_0 \in \Ncal_{r_0}(\Tcal)$, then we have $
 \EE[\{\Tcal(\hat h_{\pro}-h^{\diamond}_0)\}^2(Z)] = O\prns{\prns{\frac{\ln\prns{|\Hcal||\Gcal|/\delta}}{n}}^{1/2}}$ with probability $1-\delta$.

Note that the rate $O((\ln(|\Hcal||\Gcal|)/n)^{1/2})$ above is faster than our rate $O((\ln(|\Hcal||\Gcal|)/n)^{1/4})$ in Corollary~\pref{cor:iv2}. 
However, the rate above is for the weak projected MSE, while  our rate in Corollary~\pref{cor:iv2} is for the stronger $L_2$ error, so they are not  comparable. 
In particular, the projected MSE rate cannot translate into an $L_2$ rate without further restrictions. 
\citet{dikkala2020minimax} consider restrictiting the ill-posedness measure $\sup_{h \in \Hcal}\frac{ \EE[\{\hat h-h)\}^2(X)] }{\EE[\{\Tcal(\hat h-h)\}^2(Z)]}$. However, this ill-posedness measure may generally be infinite, and in fact is guaranteed to be infinite when the solutions to the NPIV problem are nonunique, so using it to get $L_2$ convergence rates is often problematic.

\begin{remark}[Enjoy the best of both worlds]
Here we observe that the estimator in \citet{dikkala2020minimax} can achieve a fast projected MSE rate while our estimator achieves a slow $L_2$ rate. 
One may wonder whether it is possible to achieve both guarantees at the same time. 
We explore this question in \pref{sec:both_world} and find this is possible if we put aside computational considerations. 
\end{remark}

\subsection{Comparison to \citet{LiaoLuofeng2020PENE}}\label{sec:liao}

\citet{LiaoLuofeng2020PENE}
builds on \citet{dikkala2020minimax} and incorporates additional 
Tikhonov regularization into the minimax optimization: 
\begin{align}\label{eq: Liao}
\min_{h\in \Hcal}\max_{g\in \Gcal}\,  -0.5\EE_n[g^2(Z)] + \EE_n[\prns{Y-h(X)}g(Z)] + \alpha \EE_n[h^2(X)].
\end{align}
\citet{LiaoLuofeng2020PENE}  also needs the closedness assumption in \citet{dikkala2020minimax} and a realizability assumption that the Tikhonov regularized solution $h_{0, \alpha}$ is contained in $\Hcal$ for small $\alpha$.
In addition, they assume 
that the NPIV solution is unique and satisfies a source condition with exponent $\beta \in (0,1]$, and the regularization strength $\alpha$ vanishes to $0$ at an appropriate rate as $n \to \infty$. Under these conditions, they can derive an $L_2$ convergence rate. In particular, their $L_2$ rate has the order $O(n^{-1/6})$ when the function classes are \emph{e.g.} finite or VC, and $\beta = 1$. 

Our proposed estimator and theory significantly differ from \citet{LiaoLuofeng2020PENE}. 
Specifically, our estimator does not involve the $\EE_n[g^2(Z)]$ term and our regularized term $\EE_n[h^2(X)]$ has a constant coefficient $0.5$ but \Cref{eq: Liao} needs a vanishing $\alpha$. 
Moreover, our theory accommodates non-unique solutions,
and uses  different realizability assumptions. 
Notably, under our source condition $\beta = 1$, our convergence rate $O(n^{-1/4})$ is faster than the rate $O(n^{-1/6})$ in \citet{LiaoLuofeng2020PENE}.

\subsection{Comparison to \citet{bennett2022inference} }\label{sec: compare-bennett}

Under the same source condition, \citet{bennett2022inference} \footnote{{ Note the main focus of \citet{bennett2022inference} is to estimate the Riesz representator (in their notation, $q^{\dagger}$) with $L_2$ error rates. However, their argument is easily adapted to our scenario.  } } formulate the $L_2$ error of $h_0$ as projected MSEs: $
    \EE[\{h_0-h\}^2(X) ] = \EE[(\Tcal^{*}\{\bar g_0-g\})^2(X)]$
where $\Tcal^{*} g=h$ and $\Tcal^{*} \bar g_0=h_0$. First, note that for any fixed $\bar g_0$ such that $T^\star \bar g_0 = h_0$, we have
\begin{align*}
   \Ncal_{h_0}(\Tcal^{*}) =  \argmin_{g \in \Gcal}\EE[(\Tcal^{*}\{\bar g_0-g\})^2(X)]= \argmin_{g \in \Gcal} 0.5\EE[(\Tcal^{*}g)^2(X)]- \EE[Yg(Z)]. 
\end{align*}
Then, under the closedness assumption $\Tcal^{*}\Gcal \subset \Hcal$, we have
\begin{align*}
    \Ncal_{h_0}(\Tcal^{*}) = \argmax_{g \in \Gcal}\min_{h \in \Hcal}0.5\EE[h^2(X)]+\EE[\{Y-h(X)\}g(Z)].
\end{align*}
Then, noting that the inner minimizer $h$ satisfies $\Tcal^{*}g=h$ for any given $g$, and recalling the original goal is to find $h_0$ such that $\Tcal^{*}\bar g_0=h_0$, we can deduce that \footnote{Here, letting a loss function to be $L^{\star}(h,g)$, the equation $(\bar g_0,h_0)=\argmin_{g}\argmax_{h}L(h,g)$ means $\bar g_0=\argmin_{g}\max_{h}L(h,g)$ and $h_0=\argmax_{h}L(h,\bar g_0)$.   }
\begin{align*}
    \{\bar g_0, h_0\} = \argmax_{g \in \Gcal}\argmin_{h \in \Hcal}0.5\EE[h^2(X)]+\EE[\{Y-h(X)\}g(Z)]. 
\end{align*}
Finally, their proposed estimator $\hat h_{\text{fli}}$ is given by replacing expectations with empirical averages.

In comparison to our proposed estimator $\hat h_{\mn}$, the difference lies in the flip of $\argmax$ and $\argmin$. Since $\Gcal,\Hcal$ could be non-convex, the two estimators are generally different. Indeed, this results in a significant difference in terms of the required assumptions. In $\hat h_{\text{fli}}$, the primary assumptions are the source condition, $\bar g_0 \in \Gcal$, and $\Tcal^{*}\Gcal \subset \Hcal$ (note that $h_0 \in \Hcal$ is implicit from the latter two conditions). Conversely, in our proposed estimator $\hat h_{\mn}$, the primary assumptions are the source condition and $\bar g_0 \in \Gcal,h_0 \in \Hcal$. This condition is strictly weaker as we dispense with the requirement of closedness. This improvement is significant due to the inherent conflict between the source condition and closedness,
as elucidated next.

\subsection{Tension between Source Condition and Closedness}
\label{subsec:tension}

In \Cref{sec:dikkala,sec:liao,sec: compare-bennett}, the existing estimators all require certain closedness assumption, either $\Tcal(\Hcal-h^{\diamond}_0)\subset \Gcal$ for an arbitrary   solution 
$h^{\diamond}_0$ to $\Tcal h = r_0$, or $\Tcal^{*}\Gcal \subset \Hcal$.
In contrast, our proposed estimator does not need any closedness assumption. 
In this subsection, we show that the closedness conditions are  inherently in tension with the source condition.    
This illustrates the benefit of getting rid of the source condition. 
For simplicity, we consider  a compact linear operator $\Tcal$ that admits a singular value decomposition (SVD) $\{\sigma_i,u_i,v_i\}_{i=1}^{\infty}$, where $\{u_i\}_{i=1}^{\infty},\{v_i\}_{i=1}^{\infty}$ are orthonormal bases in the Hilbert spaces $L_2(Z),L_2(X)$, respectively, and $\sigma_1\geq \sigma_2\geq \cdots$ are the singular values.
It follows that $\Tcal v_i =\sigma_i u_i,\Tcal^* u_i =\sigma_i v_i $, and $\Tcal\Tcal^{\star}$ has the SVD $\{\sigma^2_i,u_i,u_i\}_{i=1}^{\infty}$.
Here we assume a compact operator merely for a simple countable SVD. Non-compact operators can be handled similarly, but involve more cumbersome notations \citep{cavalier2011inverse}. 

To understand the source condition in Assumption~\ref{assum:source}, we write the function $r_0$ as $r_0 = \sum_{i=1}^{\infty} \gamma_i u_i$ with $\sum_{i = 1}^{\infty} \gamma^2_i<\infty$.
The source condition $r_0 \in \Rcal(\Tcal\Tcal^{\star})$ means that there exists $\bar g_0=\sum_{i=1}^{\infty}\beta_i u_i $ with $\sum_{i=1}^{\infty}\beta^2_i<\infty$ such that $h_0= \Tcal^{*}\Tcal\bar g_0$. It follows from the SVD of $\Tcal\Tcal^{\star}$ that $\gamma_i=\sigma^2_i\beta_i$. Therefore, the source condition requires $\sum_{i=1}^{\infty} \gamma_i^2/\sigma^4_i<\infty$. This means that the function $r_0$ needs to be sufficiently smooth relative to the spectrum of $\mathcal{T}$. 
Obviously, the source condition is more readily satisfied when the decaying rate of $\{\sigma_i\}_{i=1}^{\infty}$ is slower, \ie, when the operators $\Tcal$ and $\Tcal^\star$ are less smooth. 
In contrast, the closedness conditions are generally more easily satisfied when  $\{\sigma_i\}_{i=1}^{\infty}$ decays faster and the operators $\Tcal$ and $\Tcal^\star$ are more smooth. 

Hence, we observe that the source condition and closedness imply opposing restrictions on the smoothness of the operators $\Tcal$ and $\Tcal^\star$.

\section{Conclusion}\label{sec:conclusion}

In this paper, we study NPIV regression with general function approximation.
We propose a penalized minimax estimator based on a novel constrained optimization formulation of the least norm IV solution. 
We prove that our estimator converges to this least norm solution, and derive its 
$L_2$ convergence rate under a source condition and realizability assumptions on both function classes for the minimax estimator. 
Notably, our estimator does not require uniqueness of the NPIV solution, and it avoids a closedness condition commonly assumed for existing minimax estimators.
There are many interesting future directions of research. One direction is extending our work to more general inverse problems, including nonlinear inverse problems \citep{ito2014inverse}. Another direction is extending our work to IV quantile regression \citep{chernozhukov2017instrumental}.

\bibliographystyle{chicago}
\bibliography{ref_COLT}

\newpage
\appendix

\section{Enjoy the Best of Both Worlds } \label{sec:both_world}

Thus far, we have encountered two types of guarantees: slow $L_2$ rates and fast projected MSEs. The next step is to obtain guarantees that possess both properties. If we put aside issues of computational efficiency, then this is actually achievable. The estimator is defined as follows: 
\begin{align*}
    \hat h_{\both}=\argmin_{h\in \Hcal_n}\max_{g\in \Gcal}\tilde L_n(h,g)
\end{align*}
where  
\begin{align*}
    \Hcal_n =\{h\in \Hcal; \max_{g\in \Gcal}L_n(h,g) - \min_{h\in \Hcal}\max_{g\in \Gcal}L_n(h,g)\leq \mu_n 
    \}. 
\end{align*}
Here, $\mu_n$ is some hyperparameter. The set $\Hcal_n$ is defined so that each of its element has the $L_2$ convergence guarantee under the source condition.    

\begin{theorem}[fast projected MSEs + slow $L_2$ errors]
\label{thm:both}
Suppose $\Hcal,\Gcal$ are finite for simplicity. Suppose $h_0 \in \Hcal , \Tcal(\Hcal-h_0)\subset \Gcal,  \Ncal_{h_0}(\Tcal^{\star})\cap \Gcal \neq \emptyset$.  
Then, when we take $\mu_n = (C_{\Hcal}+C_{\Gcal})^2\sqrt{\ln(|\Hcal||\Gcal| /\delta)/n}$, with probability $1-\delta$, we have

\begin{align*}
    \|\Tcal(\hat h_{\both}-h_0)\|_2 \leq c(C_{\Hcal}+C_{\Gcal})\sqrt{\frac{\ln(|\Hcal||\Gcal|/\delta)}{n}},\quad \| \hat h_{\both}-h_0 \|_2 \leq c(C_{\Hcal}+C_{\Gcal}) \prns{\frac{\ln(|\Hcal||\Gcal|/\delta)}{n}}^{1/4}. 
\end{align*}
\end{theorem}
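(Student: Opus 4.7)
The plan is to combine the slow $L_2$ guarantee of Theorem~\ref{thm:main} with the fast projected-MSE guarantee of \citet{dikkala2020minimax} through the intermediate set $\Hcal_n$. By construction, every member of $\Hcal_n$ is a $\mu_n$-approximate minimizer of the penalized empirical objective $\max_{g\in\Gcal}L_n(\cdot,g)$, so it inherits a slow $L_2$ rate from the Theorem~\ref{thm:main} analysis; and among these, $\hat h_{\both}$ additionally minimizes the Dikkala objective $\max_{g\in\Gcal}\tilde L_n(\cdot,g)$, so it inherits the fast projected-MSE rate provided that $h_0\in\Hcal_n$ is a valid comparator.

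The first step is a union-bound concentration over the finite classes $\Hcal\times\Gcal$: with probability at least $1-\delta$, the uniform deviations
\begin{align*}
    &\sup_{h\in\Hcal,g\in\Gcal}\bigl|(\EE_n-\EE)[(Y-h(X))g(Z)+0.5\,h^2(X)]\bigr|\;\le\; c(C_\Hcal+C_\Gcal)^2\sqrt{\ln(|\Hcal||\Gcal|/\delta)/n},\\
    &\sup_{h\in\Hcal,g\in\Gcal}\bigl|(\EE_n-\EE)[(Y-h(X))g(Z)-0.5\,g^2(Z)]\bigr|\;\le\; c(C_\Hcal+C_\Gcal)^2\sqrt{\ln(|\Hcal||\Gcal|/\delta)/n}
\end{align*}
hold simultaneously, along with the variance-aware localized version needed by the Dikkala fast-rate argument. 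Condition on this event. Since $h_0=\argmin_{h\in\Hcal}\max_{g\in\Gcal}L(h,g)$ by Theorem~\ref{thm:idenfication}, the first deviation yields $\max_{g\in\Gcal}L_n(h_0,g)-\min_{h'\in\Hcal}\max_{g\in\Gcal}L_n(h',g)\le 2c(C_\Hcal+C_\Gcal)^2\sqrt{\ln(|\Hcal||\Gcal|/\delta)/n}\le\mu_n$ for the stated choice of $\mu_n$, so $h_0\in\Hcal_n$.

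For the slow $L_2$ bound, the Theorem~\ref{thm:main} argument gives $0.5\,\|h-h_0\|_2^2=L(h,\bar g_0)-L(h_0,\bar g_0)\le \max_{g\in\Gcal}L(h,g)-\max_{g\in\Gcal}L(h_0,g)$ for any $\bar g_0\in\Ncal_{h_0}(\Tcal^\star)\cap\Gcal$. Converting to $L_n$ by the first uniform bound and using the defining inequality $\max_{g\in\Gcal}L_n(h,g)-\max_{g\in\Gcal}L_n(h_0,g)\le\mu_n$ for every $h\in\Hcal_n$ (which follows from $h_0\in\Hcal$, so that $\min_{h'\in\Hcal}\max_gL_n(h',g)\le\max_gL_n(h_0,g)$), we obtain $\|\hat h_{\both}-h_0\|_2^2\le 2\mu_n+O((C_\Hcal+C_\Gcal)^2\sqrt{\ln(|\Hcal||\Gcal|/\delta)/n})$; taking square roots yields the claimed $(\ln(|\Hcal||\Gcal|/\delta)/n)^{1/4}$ bound. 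For the fast projected-MSE bound, the closedness condition $\Tcal(\Hcal-h_0)\subset\Gcal$ gives the identity $\max_{g\in\Gcal}\EE[\tilde L(h,g)]=0.5\,\|\Tcal(h-h_0)\|_2^2$ used in \citet{dikkala2020minimax}. Since $h_0\in\Hcal_n$, optimality of $\hat h_{\both}$ over $\Hcal_n$ implies $\max_{g\in\Gcal}\tilde L_n(\hat h_{\both},g)\le \max_{g\in\Gcal}\tilde L_n(h_0,g)=O((C_\Hcal+C_\Gcal)\sqrt{\ln(|\Hcal||\Gcal|/\delta)/n})$ (its population value is nonpositive because $\Tcal h_0=r_0$), and the Dikkala localized analysis then converts this into the claimed $\sqrt{\ln(|\Hcal||\Gcal|/\delta)/n}$ bound on $\|\Tcal(\hat h_{\both}-h_0)\|_2$.

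The main obstacle is keeping the two minimax objectives aligned: $\Hcal_n$ is defined through $L_n$ while $\hat h_{\both}$ minimizes $\tilde L_n$, so $\mu_n$ must be chosen large enough to ensure $h_0\in\Hcal_n$ (so that $h_0$ is a valid comparator for the Dikkala step) while also small enough not to spoil the slow-rate bound, which scales like $\sqrt{\mu_n}$. Matching $\mu_n$ to the uniform concentration scale $(C_\Hcal+C_\Gcal)^2\sqrt{\ln(|\Hcal||\Gcal|/\delta)/n}$ achieves both. The rest is a routine combination of the arguments of Theorems~\ref{thm:main} and~\ref{thm:idenfication} with the fast-rate analysis of \citet{dikkala2020minimax}.
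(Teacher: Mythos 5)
Your proof is correct and follows essentially the same approach as the paper: show $h_0 \in \Hcal_n$ via uniform concentration over the finite classes, derive the slow $L_2$ rate for every element of $\Hcal_n$ from the Theorem~\ref{thm:main} argument combined with the $\mu_n$-constraint, and invoke the Dikkala localized analysis with $h_0$ as a valid comparator in $\Hcal_n$ for the fast projected-MSE rate. Your chain of inequalities for establishing $h_0 \in \Hcal_n$ is slightly more streamlined than the paper's (passing directly through the population optimality of $h_0$ rather than telescoping through both $\hat g(\cdot)$ and $g(\cdot)$), but the underlying mechanism is identical.
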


\section{General Characterization of Saddle Points}\label{sec:general}

First, notice 
\begin{align}\label{eq:best_response}
    \{h_0\} = \min_{h \in L_2(X)}L(h,\bar g_0)
\end{align}
for any $\bar g_0 \in \Ncal_{h_0}(\Tcal^{*})$. In other words, the optimal response to $L(h,\bar g_0)$ is uniquely $h_0$.  
It follows from two observations: (1) $h_0$ is a best response for any element $\bar g_0$ in $\Ncal_{h_0}(\Tcal^{*})$, since $(h_0,\bar g_0)$ is a saddle point by \pref{lem:saddle_point}; and (2) the best response for each $\bar g_0$ is unique, since $L(h,\bar g_0)$ is strictly convex in $h$, due to the $\langle h,h\rangle_{L_2(X)}$ term.

Next, we invoke the following general characterization of saddle points. Here, $(\tilde x,\tilde y) \in \argmin_{x \in \Xcal'}\argmax_{y'\in \Ycal'}f(x,y)$ means $\tilde x \in \argmin_{x \in \Xcal'}\max_{y\in \Ycal'}f(x,y)$ and $\tilde y \in \argmax_{y\in \Ycal'}f(\tilde x,y)$.

\begin{lemma}[Characterization of saddle points over constrained sets]\label{lem:invaraince_lemma}
Let $\Zcal$ be a set of saddle points for $f(x,y)$ over $\Xcal,\Ycal$. Let $\Zcal_{\Xcal}=\argmin_{x \in \Xcal}\max_{y\in \Ycal}f(x,y), 
( \cdot,\tilde \Zcal_{\Xcal})=\argmax_{y\in \Ycal}\argmin_{x \in \Xcal}f(x,y).$
Then, for  $\Xcal'\subset \Xcal,\Ycal'\subset \Ycal$, if $\Zcal \cap (\Xcal',\Ycal')$ is non-empty, we have 
\begin{align}\label{eq:one_direction}
   \Zcal_{\Xcal} \cap \Xcal' \subset \argmin_{x \in \Xcal'}\max_{y\in \Ycal'}f(x,y)
\end{align}
and 
\begin{align}\label{eq:second_direction}
  \argmin_{x \in \Xcal'}\max_{y\in \Ycal'}f(x,y) \subset \tilde \Zcal_{\Xcal}\cap \Xcal'.
\end{align}
\end{lemma}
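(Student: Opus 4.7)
The plan is to prove both inclusions via the saddle-point inequality alone, after first establishing the single key auxiliary fact that the restricted minimax value equals the global saddle value. Concretely, let $(x^{\dagger},y^{\dagger}) \in \Zcal \cap (\Xcal',\Ycal')$, which exists by the nonemptiness hypothesis, and let $v^{\star} \coloneqq f(x^{\dagger},y^{\dagger})$. By definition of saddle point, $f(x,y^{\dagger}) \geq v^{\star} \geq f(x^{\dagger},y)$ for all $x \in \Xcal, y \in \Ycal$, and this already pins down $v^{\star}$ as the common minimax/maximin value on the unrestricted sets.

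I would first show that $\min_{x \in \Xcal'}\max_{y \in \Ycal'} f(x,y) = v^{\star}$. For the lower bound, fix any $x \in \Xcal'$ and use $y^{\dagger} \in \Ycal'$ together with the saddle inequality to get $\max_{y \in \Ycal'} f(x,y) \geq f(x,y^{\dagger}) \geq v^{\star}$. For the matching upper bound, plug in $x = x^{\dagger} \in \Xcal'$ and use $\max_{y \in \Ycal'} f(x^{\dagger},y) \leq \max_{y \in \Ycal} f(x^{\dagger},y) = v^{\star}$, where the last equality is again the saddle property for $(x^{\dagger},y^{\dagger})$.

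Given this auxiliary fact, both inclusions are essentially one-liners. For \eqref{eq:one_direction}, take $x_0 \in \Zcal_{\Xcal} \cap \Xcal'$; then $\max_{y \in \Ycal'} f(x_0,y) \leq \max_{y \in \Ycal} f(x_0,y) = v^{\star}$, and combined with the auxiliary lower bound above this value equals $v^{\star}$, so $x_0$ attains the restricted minimax. For \eqref{eq:second_direction}, take any $x_0 \in \argmin_{x \in \Xcal'}\max_{y \in \Ycal'} f(x,y)$; by the auxiliary fact $\max_{y \in \Ycal'} f(x_0,y) = v^{\star}$, hence $f(x_0,y^{\dagger}) \leq v^{\star}$, while the saddle inequality gives $f(x_0,y^{\dagger}) \geq v^{\star}$. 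Therefore $f(x_0,y^{\dagger}) = v^{\star} = \min_{x \in \Xcal} f(x,y^{\dagger})$, which together with $y^{\dagger} \in \argmax_{y \in \Ycal} \min_{x \in \Xcal} f(x,y)$ places $x_0$ in $\tilde\Zcal_{\Xcal} \cap \Xcal'$.

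The only mild obstacle is parsing the compound notation $(\cdot,\tilde\Zcal_{\Xcal}) = \argmax_{y \in \Ycal} \argmin_{x \in \Xcal} f(x,y)$ per the paper's footnote convention, so that $\tilde\Zcal_{\Xcal}$ is correctly identified as the set of $x$'s that best-respond to some maximin $y$; once that is fixed, the argument is pure bookkeeping with the saddle inequality and requires no compactness or convexity of $\Xcal',\Ycal'$ beyond the existence assumption $\Zcal \cap (\Xcal',\Ycal') \neq \emptyset$.
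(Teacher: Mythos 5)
Your proof is correct and follows essentially the same strategy as the paper's: anchor on a saddle point $(x^\dagger,y^\dagger) \in \Zcal\cap(\Xcal',\Ycal')$ and propagate the saddle inequalities. The only stylistic difference is that you factor out the auxiliary fact $\min_{x\in\Xcal'}\max_{y\in\Ycal'}f = v^\star$ and reuse it for both inclusions, whereas the paper instead introduces a restricted maximin point $\tilde y\in\argmax_{y\in\Ycal'}\inf_{x\in\Xcal'}f(x,y)$ and runs a four-term chain of inequalities for \eqref{eq:second_direction}; your version is slightly more economical and makes the first inclusion, which the paper dispatches with a terse ``Clearly\ldots Therefore\ldots,'' fully explicit.
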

In \pref{lem:invaraince_lemma}, the primary assumption $\Zcal \cap (\Xcal',\Ycal')\neq \emptyset$ means that some saddle point (with respect to $\Xcal,\Ycal$) is included in $\Xcal',\Ycal'$. The equation \eqref{eq:one_direction} states that any saddle points ($ \Zcal_{\Xcal} \cap \Xcal')$ over unconstrained function classes ($\Xcal,\Ycal$) are still saddle points over constrained function classes ($\Xcal',\Ycal'$). The equation \eqref{eq:second_direction} states that any saddle point over constrained function classes ($\Xcal',\Ycal'$) is included in $\tilde \Zcal_{\Xcal}$. 

We combine the above characterization of saddle points with \pref{lem:saddle_point} by setting $(\Xcal',\Ycal')=(\Hcal,\Gcal),(\Xcal,\Ycal)=(L_2(X),L_2(Z)),f(x,y)=L(h,g)$. As an immediate consequence, when $h_0 \in \Hcal, \Ncal_{h_0}(\Tcal^{\star})\cap \Gcal \neq \emptyset $ (i.e., saddle points are included in $(\Hcal,\Gcal)$), using \pref{eq:one_direction}, we have 
$   \{h_0\} \subset \argmin_{h \in \Hcal}\max_{g \in \Gcal}L(h,g).$
Next, using \pref{eq:second_direction}, we have 
$
 \argmin_{h \in \Hcal}\max_{g \in \Gcal}L(h,g) \subset \{h_0\}$
since $(\cdot,h_0)=\argmax_{g \in \Gcal}\argmin_{h \in \Hcal}L(h,g)$ by \eqref{eq:best_response}.

\section{Computational Perspective}\label{sec:comptuational}

To solve the optimization problem in \Cref{eq:motivatioin3}, we can leverage the recent advances in minimax optimization algorithms, even when the function classes $\Hcal$ and $\Gcal$ are neither convex nor concave, such as neural network classes \citep{daskalakis2017training}.
In particular,  using a Reproducing kernel Hilbert space (RKHS) ball as $\Gcal$ is particularly convenient, since then the inner maximization problem in \Cref{eq:motivatioin3} has a closed form solution. 
Specifically, when 
 $\Gcal=\{g:\|g\|_{K}\leq 1\}$ for a positive definite kernel $K:D_Z\times D_Z \to \RR$ and its associated RKHS norm  $\|\cdot\|_{K}$, \Cref{eq:motivatioin3} reduces to 
\begin{align*}
   \argmin_{h\in \Hcal} 0.5 \EE_n[h^2(X)] + \prns{\frac{1}{n^2}\sum_{i=1}^n\sum_{j=1}^n \prns{Y_i-h(X_i)}K(Z_i,Z_j)\prns{Y_j-h(X_j)}}^{1/2}. 
\end{align*}

\section{Proof in \pref{sec:setup} }

\subsection{Proof of \pref{lem:minimum}}

Here, we have $\Ncal_{r_0}(\Tcal)=h_0 + \Ncal(\Tcal)$.
The least norm solution $h_0$ among $\Ncal_{r_0}(\Tcal)$ is the projection of any element in $\Ncal_{r_0}(\Tcal)$ onto (the closed subspace) $\Ncal(\Tcal)^\perp$. Hence,  $\{h_0\} =\Ncal(\Tcal)^{\perp}\cap \Ncal_{r_0}(\Tcal)=\overline{\Rcal(\Tcal^{\star})} \cap \Ncal_{r_0}(\Tcal) $.  Here, we use $\Ncal(\Tcal)^{\perp}=\overline{\Rcal(\Tcal^{\star})} $.

\section{Proof in \pref{sec:primary_identification}}

\subsection{Proof of \pref{lem:norm}}

It is clear from \pref{lem:minimum}. 

\subsection{Proof of \pref{lem:saddle_point}}

The proof is as follows. From \pref{sec:auxiliary}, a point $(h',g')\in (L_2(X),L_2(Z))$ is a saddle point if and only if the strong duality holds and  $h' \in    \argmin_{h \in L_2(X)}\sup_{g \in L_2(Z)}L(h,g)$ and $g' \in \argmax_{g \in L_2(Z)}\inf_{h \in L_2(X)}L(h,g).$ We check this condition. 

Hence, we first show 
\begin{align}\label{eq:duality1}
    \{h_0\} = \argmin_{h \in L_2(X)}\sup_{g \in L_2(Z)}L(h,g),  \quad 0.5 \|h_0\|^2_2= \min_{h \in L_2(X)}\sup_{g \in L_2(Z)}L(h,g)
\end{align}
First, for any $h\neq \Ncal_{r_0}(\Tcal)$, we have $\sup_{g \in L_2(Z)}L(h,g)=\infty$. Hence, the solution needs to belong to $\Ncal_{r_0}(\Tcal)$. Since $\sup_{g \in L_2(Z)}L(h,g)=0.5 \EE[h^2(X)]$ for any $h\in \Ncal_{r_0}(\Tcal)$, using \pref{lem:norm}, thus, from the definition of $h_0$, the solution is $h_0$.

Next, we show  
\begin{align}\label{eq:duality2}
 \Ncal_{h_0}(\Tcal^{\star})= \argmax_{g \in L_2(Z)}\inf_{h \in L_2(X)}L(h,g),  \quad 0.5\|h_0\|^2_2= \max_{g \in L_2(Z)}\inf_{h \in L_2(X)}L(h,g). 
\end{align}
We solve the inner minimization problem first. Then, 
\begin{align*}
   \inf_{h \in L_2(X)}L(h,g) &= \inf_{h \in L_2(X)} 0.5\|h -\Tcal^{\star} g \|^2_2 + \langle r_0,g\rangle_{L_2(Z)}- 0.5\langle \Tcal^{\star} g,\Tcal^{\star} g\rangle_{L_2(X)}\\
   &=\langle r_0,g\rangle_{L_2(Z)}- 0.5 \langle \Tcal^{\star} g,\Tcal^{\star} g\rangle_{L_2(X)} \\
   &= \langle \Tcal h_0,g\rangle_{L_2(Z)}- 0.5 \langle \Tcal^{\star} g,\Tcal^{\star} g\rangle_{L_2(X)} \tag{Use $r_0 = \Tcal h_0$} \\
   &= -0.5\|\Tcal^{\star}g - h_0\|^2_2 + 0.5 \|h_0\|^2_2. 
\end{align*}
By using Assumption~\ref{assum:source}, since $\Ncal_{h_0}(\Tcal^{*})$ is not empty, we have 
\begin{align*}
    \Ncal_{h_0}(\Tcal^{*}) = \argmax_{g \in L_2(Z)}\inf_{h \in L_2(X)}L(h,g).  
\end{align*}
Finally, since the strong duality holds from \pref{eq:duality1} and \pref{eq:duality2}, the set of saddle points is $(h_0,\Ncal_{h_0}(\Tcal^{\star}))$.

\subsection{Proof of \pref{lem:saddle_point2}}

Recall the saddle point exists if and only if 
 $\argmin_{h \in L_2(X)}\sup_{g \in L_2(Z)}L(h,g)$ and $ \argmax_{g \in L_2(Z)}\inf_{h \in L_2(X)}L(h,g)$ exist and the strong duality holds. We already show that Assumption~\ref{assum:source} is sufficient to ensure the existence of the saddle point. In this proof, we show Assumption~\ref{assum:source} is necessary to ensure the existence of the saddle point. 
 
To ensure the existence of saddle point, we need to ensure the existence of $ \argmax_{g \in L_2(Z)}\inf_{h \in L_2(X)}L(h,g)$.  This optimization problem is equivalent to 
 \begin{align}\label{eq:projection}
     \argmin_{g\in L_2(Z)}\|\Tcal^{\star}g-h_0\|^2_2
 \end{align}
as we see in the proof of \pref{lem:saddle_point}. 
This solution exists if and only if $h_0 \in \Rcal(\Tcal^{\star})+\Rcal(\Tcal^{\star})^{\perp}$.
To prove this, we define a projection operator onto $\overline{\Rcal(\Tcal^{\star})}$ as $P_{\overline{\Rcal(\Tcal^{\star})}}$. Then, the solution of \eqref{eq:projection} exists if and only if  $ P_{\overline{\Rcal(\Tcal^{\star})}}h_0 \in \Rcal(\Tcal^{\star})$. Here, $ P_{\overline{\Rcal(\Tcal^{\star})}}h_0 \in \Rcal(\Tcal^{\star})$ implies 
\begin{align*}
     h_0 = P_{\overline{\Rcal(\Tcal^{\star})}}h_0  + (I-P_{\overline{\Rcal(\Tcal^{\star})}})h_0 \in  \Rcal(\Tcal^{\star})+ \Rcal(\Tcal^{\star})^{\perp}. 
\end{align*}
Besides, $h_0 \in \Rcal(\Tcal^{\star})+\Rcal(\Tcal^{\star})^{\perp}$ implies $P_{\overline{\Rcal(\Tcal^{\star})}}h_0 \in \Rcal(\Tcal^{\star})$ recalling $ \Hcal= \overline{\Rcal(\Tcal^{\star})} \bigoplus \Rcal(\Tcal^{\star})^{\perp}$. 
This finishes proving that the  solution of \eqref{eq:projection} exists if and only if $h_0 \in \Rcal(\Tcal^{\star})+\Rcal(\Tcal^{\star})^{\perp}$.

Finally, recall $h_0 \in \overline{\Rcal(\Tcal^{\star})}  $ using \pref{lem:minimum}. Thus, $h_0 \in \Rcal(\Tcal^{\star})+\Rcal(\Tcal^{\star})^{\perp}$  
implies 
 $h_0 \in \Rcal(\Tcal^{\star})$ since if $h_0 =h_{0,2} + h_{0,3}, h_{0,2} \in \Rcal(\Tcal^{\star}),h_{0,3} \in \Rcal(\Tcal^{\star})^{\perp}$, we have $h_{0,3}=h_0 - h_{0,2}\in \overline{\Rcal(\Tcal^{\star})}\cap \Rcal(\Tcal^{\star})^{\perp}=\{0\}$. 
 
 The statement is concluded by the fact $h_0 \in \Rcal(\Tcal^{\star})$ implies $r_0 \in \Rcal(\Tcal \Tcal^{\star})$.

\subsection{Proof of \pref{lem:invaraince_lemma}}

Clearly, each element in $\Zcal \cap (\Xcal,\Ycal)$ is a saddle point over $\Xcal',\Ycal'$ since this is a saddle point over $\Xcal,\Ycal$. Therefore, 
\begin{align*}
    \Zcal_{\Xcal} \cap \Xcal' \subset \argmin_{x\in \Xcal'}\max_{y \in \Ycal'}f(x,y). 
\end{align*}

Now, we prove the second statement. Let $(x_0,y_0)$ be an element in $\Zcal \cap (\Xcal,\Ycal)$ (this exists and this is a saddle point).
Then, take:
\begin{align*}
    \tilde x \in \argmin_{x \in \Xcal'}\sup_{y\in \Ycal'}f(x,y), \quad  \tilde y \in \argmax_{y\in \Ycal'}\inf_{x \in \Xcal'}f(x,y). 
\end{align*}
Since $(\tilde x,\tilde y)$ is a saddle point over $\Xcal',\Ycal'$, we have 
\begin{align*}
 f(x_0, y_0)\geq f(x_0,\tilde y)  \geq f(\tilde x,\tilde y)\geq f(\tilde x, y_0)\geq  f(x_0, y_0).
\end{align*}
Then, the above inequalities are equalities. Hence, we have 
\begin{align*}
    f(x_0, \tilde y) = f(\tilde x,y_0), \quad    f(x_0, y_0)=f(x_0, \tilde y)
\end{align*}
This means that 
\begin{align*}
    \tilde x \in \Zcal'_{\Xcal} \subset \Xcal'. 
\end{align*}
recalling $y_0 \in \argmax_{y\in \Ycal}\min_{x\in  \Xcal}f(x,y)$.

\subsection{Proof of \pref{thm:idenfication}}

We show two proofs. 

\paragraph{First Proof.}

We use \pref{lem:invaraince_lemma}. First, using \eqref{eq:one_direction}, 
\begin{align*}
    \{h_0\} \subset \argmin_{h \in \Hcal}\max_{g\in \Gcal}L(h,g).
\end{align*}
Second, we use \eqref{eq:second_direction}. Here, recalling the proof of \pref{lem:saddle_point}, we have 
\begin{align*}
   (\Ncal_{h_0}(\Tcal^{*}),h_0)) \in  \argmax_{g\in \Gcal}\argmin_{h \in \Hcal}L(h,g). 
\end{align*}
Therefore, using \pref{lem:saddle_point}, we have 
\begin{align*}
    \argmin_{h \in \Hcal}\max_{g\in \Gcal}L(h,g) \subset \{h_0\}. 
\end{align*}
Hence, 
\begin{align*}
      \argmin_{h \in \Hcal}\max_{g\in \Gcal}L(h,g) = \{h_0\}. 
\end{align*}

\paragraph{Second Proof.}

We give more direct proof to show the finite sample result later. 

We take some element $\bar g_0$ from $\Ncal_{h_0}(\Tcal^{\star}) \cap \Gcal$. This satisfies $\Tcal^{*} \bar g_0=h_0$. 
We define
\begin{align*}
    L(h,g) &\coloneqq  0.5\EE[h^2(X)]+\EE[g(Z)\{Y-h(X)\}],\\
    \hat g(h) &\coloneqq \argmax_{g \in \Gcal}      L(h,g),\quad 
    \hat h \coloneqq \argmin_{h \in \Hcal}\sup_{g \in \Gcal}L(h,g),
\end{align*}
and $\hat g \coloneqq \hat g(\hat h)$. Hence, for any $h\in \Hcal$, 
\begin{align*}
 &  L(h,\bar g_0)  -   L(h_0,\bar g_0) \\
  &= 0.5 \EE[h^2(X)]+\EE[\bar g_0(Z)\{Y-h(X)\}]-0.5 \EE[\{h_0\}^2(X)] 0 \EE[\bar g_0(Z)\{Y-h_0(X)\}]\\
    &= 0.5\EE[h^2(X)]+\EE[\bar g_0(Z)\{h_0(X)-h(X)\}]- 0.5\EE[\{h_0\}^2(X)]\\
    &= 0.5\EE[h^2(X)]+\EE[h_0(X)\{h_0(X)-h(X)\}]- 0.5\EE[\{h_0\}^2(X)] \tag{We use $\EE[h_0(X)\mid Z]=\bar g_0(Z)$}\\
    &= 0.5\EE[\{h(X)-h_0(X)\}^2]. 
\end{align*}
Therefore, for any $h\in \Hcal$, 
\begin{align}\label{eq:key}
\EE[\{h(X)-h_0(X)\}^2] =  L(h,\bar g_0)-L(h_0,\bar g_0) . 
\end{align}
Furthermore, 

\begin{align*}
    L(\hat h,\hat g) &\geq L(\hat h,\bar g_0) \tag{Construction of estimators} \\ 
    &\geq L(h_0,\bar g_0)  \tag{Saddle point property} \\ 
     &\geq L(h_0,\hat g(h_0)). \tag{Saddle point property}
\end{align*}
Since we have $L(\hat h,\hat g)\leq L(h_0,\hat g(h_0))$ from the definition, all of the above inequalities are equalities. Then, we have
\begin{align}\label{eq:upper_bound}
     L(\hat h,\bar g_0)-L(h_0,\bar g_0) =0. 
\end{align}
In conclusion, combining \eqref{eq:key} with \eqref{eq:upper_bound}, we have 
\begin{align*}
\EE[\{\hat h(X)-h_0(X)\}^2]\leq L(\hat h,\bar g_0)-L(h_0,\bar g_0) =0.  
\end{align*}
Hence, $\hat h(X)=h_0(X)$.

\section{Proof of \pref{sec:finite}}

\subsection{Proof of \pref{thm:main}}

We take some element $\bar g_0$ from $\Ncal_{h_0}(\Tcal^{\star}) \cap \Gcal$. This satisfies $\Tcal^{*} \bar g_0=h_0$. 
We define
\begin{align*}
    L(h,g) &\coloneqq  0.5\EE[h^2(X)]+\EE[g(Z)\{Y-h(X)\}], \\
    L_n(h,g) &\coloneqq  0.5\EE_n[h^2(X)]+\EE_n[g(Z)\{Y-h(X)\}], \\
    \hat g(h) &\coloneqq  \argmax_{g \in \Gcal}  0.5\EE_n[h^2(X)]+\EE_n[g(Z)\{Y-h(X)\}], \\
    M(\Hcal,\Gcal) &\coloneqq  \sup_{h\in \Hcal,g\in \Gcal }|(\EE_n-E)[\{Y-h(X)\}g(Z) + 0.5 h(X)^2]|. 
\end{align*}
Using \eqref{eq:key}, recall for any $h \in \Hcal$, we have 
\begin{align*}
\EE[\{h(X)-h_0(X)\}^2] =  L(h,\bar g_0)-L(h_0,\bar g_0) . 
\end{align*}
Here, we have 
\begin{align*}
    L_n(\hat h,\hat g(\hat h)) &\geq L_n(\hat h,\bar g_0) \tag{Construction of estimators} \\ 
    &\geq L(\hat h,\bar g_0) -  M(\Hcal,\Gcal) \\ 
    &\geq L(h_0,\bar g_0)  -  M(\Hcal,\Gcal) \tag{Saddle point property} \\ 
     &\geq L(h_0,\hat g(h_0)) -  M(\Hcal,\Gcal) \tag{Saddle point property}  \\
     &\geq L_n(h_0,\hat g(h_0)) -  2M(\Hcal,\Gcal) \\ 
    &\geq  L_n(\hat h,\hat g(\hat h) -  2M(\Hcal,\Gcal).   \tag{Construction of estimators.}
\end{align*}
Therefore, we have 
\begin{align*}
    L(\hat h,\bar g_0)-L(h_0,\bar g_0)\leq 2M(\Hcal,\Gcal).
\end{align*}
Finally, we have 
\begin{align*}
    \EE[\{\hat h(X)-h_0(X)\}^2]\leq L(\hat h,\bar g_0)-L(h_0,\bar g_0) \leq{2M(\Hcal,\Gcal)}. 
\end{align*}

\subsection{Proof of Corollary~\ref{cor:rademacher}}

We calculate the following empirical process term: 
\begin{align*}
   \sup_{h\in \Hcal,g\in \Gcal }|(\EE_n-\EE)[\{Y-h(X)\}g(Z) + 0.5 h(X)^2]|. 
\end{align*}    
Then, from \citet[Theorem 4.10]{wainwright2019high}, this is upper-bounded by 
\begin{align*}
    c \braces{ \mathfrak{R}_n(\Acal_1) +   \mathfrak{R}_n(\Acal_2) +  \mathfrak{R}_n(\Acal_3)+(C_{\Gcal}+C_{\Hcal})C_{\Hcal}\sqrt{\ln(1/\delta)/n} }  
\end{align*}
where 
\begin{align*}
    \Acal_1 = \{ y g(z) ; g \in \Gcal\}, \quad 
    \Acal_2 = \{ h(x)g(z); h \in \Hcal, g\in \Gcal\}, \quad 
    \Acal_3 =  \{ 0.5 h(x)^2; h \in \Hcal\}.  
\end{align*}
First, we have 
\begin{align*}
    \mathfrak{R}_n(\Acal_1)\lesssim C_{\Gcal}    \mathfrak{R}_n(\Gcal). 
\end{align*}
Secondly, we have 
\begin{align*}
    \mathfrak{R}_n(\Acal_2)\lesssim (C_{\Hcal} + C_{\Gcal})(\mathfrak{R}_n(\Gcal) +\mathfrak{R}_n(\Hcal)). 
\end{align*}
Here, we use the proof of \citet[Proof of Corollary 3]{kallus2021causal}. Thirdly, we have 
\begin{align*}
     \mathfrak{R}_n(\Acal_3)\lesssim 2C_{\Hcal}\mathfrak{R}_n(\Hcal). 
\end{align*}
Combining all results together, the empirical process term is upper-bounded by 
\begin{align*}
    c \braces{(C_{\Hcal} + C_{\Gcal})(\mathfrak{R}_n(\Gcal) +\mathfrak{R}_n(\Hcal))+ (C_{\Gcal}+C_{\Hcal})C_{\Hcal}\sqrt{\ln(1/\delta)/n} } . 
\end{align*}

\subsection{Proof of Corollary~\ref{cor:iv}}

We combine the Dudley integral \pref{thm:dudley} with Corollary~\ref{cor:rademacher}.

\subsection{Proof of \pref{thm:misspecification}  }

We take some element $\bar g_0$ from $\Ncal_{h_0}(\Tcal^{\star}) \cap \Gcal$. This satisfies $\Tcal^{*} \bar g_0=h_0$. 
\begin{align*}
    L(h,g) &\coloneqq  0.5\EE[h^2(X)]+\EE[g(Z)\{Y-h(X)\}], \\
    L_n(h,g) &\coloneqq  0.5\EE_n[h^2(X)]+\EE_n[g(Z)\{Y-h(X)\}], \\
    \hat g(h) &\coloneqq  \argmax_{g \in \Gcal}  0.5\EE_n[h^2(X)]+\EE_n[g(Z)\{Y-h(X)\}], \\
    M(\Hcal,\Gcal) &\coloneqq  \sup_{h\in \Hcal,g\in \Gcal }|(\EE_n-E)[\{Y-h(X)\}g(Z) + 0.5 h(X)^2]|. 
\end{align*}
and $\hat g=\hat g(\hat h)$. Recall for any $h\in \Hcal$, 
\begin{align*}
\EE[\{h(X)-h_0(X)\}^2]\leq L(h,\bar g_0)-L(h_0,\bar g_0) . 
\end{align*}
Furthermore, 
\begin{align*}
    & L(\hat h, \bar g_0 )-L(h_0,\bar g_0) \\
    &=   \underbrace{-L(h_0,\bar g_0) + L(h^{\dagger}, \hat g(h^{\dagger}) )}_{(a)} 
    \underbrace{- L(h^{\dagger}, \hat g(h^{\dagger}) )+L(\hat h,  g^{\dagger} )}_{(c)}\underbrace{-L(\hat h,  g^{\dagger} ) + L(\hat h, \bar g_0 )}_{(f)}.   
\end{align*}
Term (a) is upper-bounded as follows: 
\begin{align*}
    L(h^{\dagger}, \hat g(h^{\dagger}) )-L(h_0,\bar g_0) &\leq 0.5\EE[\{h^{\dagger}\}^2(X)] + \|\hat g(h^{\dagger})\|_2 \|h_0-h^{\dagger}\|_2- 0.5\EE[h^2_0(X)] \\
    &\leq 0.5\EE[\{h^{\dagger}\}^2(X)] + \sup_{g}\|g\|_2 \|h_0-h^{\dagger}\|_2- 0.5\EE[h^2_0(X)] \\
    & \leq  0.5 \|h^{\dagger}+h_0\|_2  \|h^{\dagger}- h_0\|_2 + \{\sup_{g}\|g\|_2\}\|h_0-h^{\dagger}\|_2\\
    &\leq \{2C_{\Hcal}+C_{\Gcal}\}\|h_0-h^{\dagger}\|_2.
\end{align*}
Term (c) is upper-bounded as follows: 
\begin{align*}
    - L(h^{\dagger}, \hat g(h^{\dagger}) )+L(\hat h, g^{\dagger} ) & \leq - L(h^{\dagger}, \hat g(h^{\dagger}) )+ L_n(h^{\dagger}, \hat g(h^{\dagger}) )-L_n(h^{\dagger}, \hat g(h^{\dagger}) )+L_n(\hat h, g^{\dagger} )-L_n(\hat h, g^{\dagger} )+L(\hat h, g^{\dagger} )\\
    &\leq M(\Hcal,\Gcal) +(-L_n (h^{\dagger},\hat g(h^{\dagger}) ) + L_n(\hat h,\hat g)) +M(\Hcal,\Gcal) \\ 
    &\leq 2M(\Hcal,\Gcal). 
\end{align*}
The term (f) is upper-bounded as follows: 
\begin{align*}
    L(\hat h, \bar g_0 )-L(\hat h,  g^{\dagger} )  \leq \|\hat h\|_2\|g_0 - g^{\dagger}\|_2 \leq C_{\Hcal}\|g_0 - g^{\dagger}\|_2 
\end{align*}
In conclusion, we have 
\begin{align*}
\EE[\{\hat h(X)-h_0(X)\}^2]\leq L(\hat h,\bar g_0)-L(h_0,\bar g_0) \leq \{2C_{\Hcal}+C_{\Gcal}\}\|h_0-h^{\dagger}\|_2+ C_{\Hcal}\|g_0 - g^{\dagger}\|_2 + 2M(\Hcal,\Gcal). 
\end{align*}

\section{Proof of \pref{sec:discussion}}

\subsection{Proof of Rate in \pref{sec:dikkala}}

Recall 
\begin{align*}
    \hat h_{\pro}=\argmin_{h\in \Hcal}\max_{g\in \Gcal}\tilde L_n(h,g),\quad \tilde L_n(h,g)\coloneqq -0.5\EE_n[g^2(Z)] + \EE_n[\{Y-h(X)\}g(Z)]. 
\end{align*}
Let 
\begin{align*}
    \hat g_h &\coloneqq \argmax_{g\in \Gcal} \tilde L_n(h,g),\quad g_h \coloneqq \EE[ Y-h(X)\mid Z=\cdot], \\ 
     \Gamma(h,g) &\coloneqq -0.5g^2(Z) +  \{Y-h(X)\}g(Z),\quad \kappa(h,g) \coloneqq \Gamma(h,g)- \Gamma(h,g_h).
\end{align*}

\paragraph{First Step.}

Our goal is to show  
\begin{align}\label{eq:first_goal}
   \forall h \in \Hcal; |\EE_n[\kappa(h,\hat g_h)]|\lesssim \frac{(C^2_{\Hcal} + C^2_{\Gcal})\ln(|\Gcal|/\delta)}{n}. 
\end{align}
We fix $h$ hereafter. 

Here, first, we have 
\begin{align*}
    \EE[ \kappa(h,\hat g_h)] = 0.5 \EE[(\hat g_h-g_h)^2(Z)].  
\end{align*}
Then, 
\begin{align*}
    \EE[\kappa(h,\hat g_h) ] & \leq \EE_n[\kappa(h,\hat g_h)] + |(\EE-\EE_n)[\kappa(h,\hat g_h) ]| \\ 
    & \leq |(\EE-\EE_n)[\kappa(h,\hat g_h) ]|.
\end{align*} 
From the first line to the second line, we use the definition of the estimator and $g_h \in \Gcal$. 

Now, we use Bernstein's inequality. With probability $1-\delta$, we have 
\begin{align*}
   \forall g \in \Gcal, \forall h \in \Hcal; (\EE-\EE_n)[\kappa(h,g) ]\leq \sqrt{ \frac{\mathrm{var}[ \kappa(h,g)]\ln( |\Gcal||\Hcal|/\delta)}{n} } + \frac{(C^2_{\Hcal} + C^2_{\Gcal})\ln(|\Gcal||\Hcal| /\delta)}{n}.
\end{align*}
In the following, we condition on this event. Then, we have  
\begin{align}\label{eq:key1}
    \EE[\kappa(h,\hat g_h) ] & \lesssim \sqrt{\frac{\mathrm{var}[\kappa(h,\hat g_h)]\ln(|\Gcal||\Hcal|/\delta)}{n}}   + \frac{(C^2_{\Hcal} + C^2_{\Gcal})\ln(|\Gcal||\Hcal|/\delta)}{n}.
\end{align}
Here, we have 
\begin{align*}
    \mathrm{var}[\kappa(h,\hat g_h) ] &= \EE[\{\Gamma (h, g_h)-\Gamma(h,\hat g_h) \}^2] \\
    &=\EE[0.25 \{\hat g_h(Z)-g_h(Z)\}^2\{\hat g_h(Z) + g_h(Z)\}^2   ] \\
    &\leq C^2_{\Gcal} \EE[\{\hat g_h(Z)-g_h(Z)\}^2 ]. 
\end{align*}
Therefore, combining the above with \pref{eq:key1}, we obtain
\begin{align*}
    \EE[\{\hat g_h(Z)-g_h(Z)\}^2 ] \lesssim \frac{(C^2_{\Hcal} + C^2_{\Gcal})\ln(|\Gcal| |\Hcal| /\delta)}{n}.
\end{align*}
Hence,
\begin{align*}
    |\EE_n[\kappa(h,\hat g_h)]| & \leq |\EE[\kappa(h,\hat g_h)]|+     |(\EE_n - \EE)[\kappa(h,\hat g_h)]|\\
    &= 0.5\EE[\{\hat g_h(Z)-g_h(Z)\}^2 ] + |(\EE_n - \EE)[\kappa(h,\hat g_h)]| \\
    &\lesssim \frac{(C^2_{\Hcal} + C^2_{\Gcal})\ln(|\Gcal||\Hcal| /\delta)}{n} +  |(\EE_n - \EE)[\kappa(h,\hat g_h)]|\\
    &\lesssim \frac{(C^2_{\Hcal} + C^2_{\Gcal})\ln(|\Gcal||\Hcal| /\delta)}{n}. \tag{Use \pref{eq:key1}}
\end{align*}

\paragraph{Second Step.}

We define 
\begin{align*}
    \Xi(h) := \Gamma(h,g_h) - \Gamma(h_0,g_{h_0}). 
\end{align*}
Note $\Gamma(h_0,g_{h_0})=0$ since $h_0$. Furthermore, 
\begin{align}\label{eq:tool2}
    \EE[\Xi(h) ] = \EE[g^2_h(Z)], \quad \EE[\Xi^2(h) ] \leq (C^2_{\Hcal} +C^2_{\Gcal})\EE[g^2_h(Z) ]. 
\end{align}

Then, 
\begin{align*}
    \EE[ \Xi(\hat h) ]\leq \EE_n[\Xi(\hat h) ] +  |(\EE-\EE_n)[\Xi(\hat h) ]| 
\end{align*}
Here, using the first conclusion \pref{eq:first_goal}, we get
\begin{align*}
    \EE_n[\Xi(\hat h) ] &= \EE_n[\Gamma(\hat h, g_{\hat h}) - \Gamma(h_0,0)] \\ 
    &\leq \EE_n[\Gamma(\hat h, \hat g_{\hat h}) - \Gamma(h_0,\hat g_{h_0})] + c\frac{(C^2_{\Hcal} + C^2_{\Gcal})\ln(|\Gcal||\Hcal| /\delta)}{n}\\
    &\leq c\frac{(C^2_{\Hcal} + C^2_{\Gcal})\ln(|\Gcal||\Hcal| /\delta)}{n}. 
\end{align*}
From the first line to the second line, we use $h_0 \in \Hcal$ and \pref{eq:first_goal}. From the second line to the third line, we use the construction of the estimator. 

Therefore, 
\begin{align*}
    \EE[ \Xi(\hat h) ]\leq c\frac{(C^2_{\Hcal} + C^2_{\Gcal})\ln(|\Gcal||\Hcal| /\delta)}{n} +  |(\EE-\EE_n)[\Xi(\hat h) ]|. 
\end{align*}
Here, we use Bernstein's inequality. With probability $1-\delta$, we have 
\begin{align*}
    \forall h \in \Hcal; |(\EE-\EE_n)[\Xi(h)] |\leq \sqrt{\frac{\mathrm{var}[\Xi(h)]\ln(|\Hcal|/\delta)}{n} } + c\frac{(C^2_{\Hcal} + C^2_{\Gcal})\ln(|\Hcal|/\delta)}{n}. 
\end{align*}
Hereafter, we condition on this event. Thus, using \eqref{eq:tool2}, we have 
\begin{align*}
    \EE[ g^2_{\hat h}(Z) ]\leq \sqrt{\frac{g^2_{\hat h}(Z)\ln(|\Hcal|/\delta)}{n} } + c\frac{(C^2_{\Hcal} + C^2_{\Gcal})\ln(|\Hcal||\Gcal|/\delta)}{n}.
\end{align*}
Therefore, by some algebra, we obtain 
\begin{align*}
    \EE[ g^2_{\hat h}(Z) ]\lesssim \frac{(C^2_{\Hcal} + C^2_{\Gcal})\ln(|\Hcal||\Gcal|/\delta)}{n}. 
\end{align*}

\section{Proof of \pref{sec:both_world}}

\subsection{Proof of \pref{thm:both}}

We use the notation in \pref{thm:main}. 
Take $\mu_n$ such that $2\Mcal(\Hcal,\Gcal)\leq \mu_n$ holds with probabiltiy $1-\delta$. We condition on this event.

The guarantee in terms of projected MSEs is straightforward as long as $h_0$ is included in the confidence ball $\Hcal_n$ with probability $1-\delta$ by following the proof in \pref{sec:discussion}. In fact, we have 
\begin{align*}
    &L_n(h_0,\hat g(h_0))- \min_{h}L_n(h,\hat g(h))=L_n(h_0,\hat g(h_0))- L_n(\hat h,\hat g(\hat h))  \\
    &\leq L_n(h_0,\hat g(h_0)) - L_n(\hat h, g(\hat h))  \\ 
    &=L_n(h_0,\hat g(h_0))-L(h_0,\hat g(h_0))) + L(h_0,\hat g(h_0))) -L(\hat h, g(\hat h))+
   L(\hat h, g(\hat h))   -L_n(\hat h, g(\hat h))  \\
   &\leq L_n(h_0,\hat g(h_0))-L(h_0,\hat g(h_0))) + L(h_0,g(h_0))) -L(\hat h, g(\hat h))+
   L(\hat h, g(\hat h))   -L_n(\hat h, g(\hat h))  \\
   &\leq 2 M(\Hcal,\Gcal)\leq \mu_n.  
\end{align*}
Hence, $h_0 \in \Hcal_n$.

Next, we prove the $L_2$ convergence guarantee. Here, for any $\hat h$ in the confidence ball $\Hcal_n$, we have 
\begin{align*}
    L_n(\hat h,\hat g(\hat h)) &\geq L_n(\hat h,\bar g_0) \tag{Construction of estimators} \\ 
    &\geq L(\hat h,\bar g_0) -  M(\Hcal,\Gcal) \\ 
    &\geq L(h_0,\bar g_0)  -  M(\Hcal,\Gcal) \tag{Saddle point property} \\ 
     &\geq L(h_0,\hat g(h_0)) -  M(\Hcal,\Gcal) \tag{Saddle point property}  \\
     &\geq L_n(h_0,\hat g(h_0)) -  2M(\Hcal,\Gcal) \\ 
    &\geq  \min_{h}L_n(h,\hat g(h)) -  2M(\Hcal,\Gcal) \\
     &\geq   L_n(\hat h,\hat g(\hat h)) -  2M(\Hcal,\Gcal)-\mu_n. 
\end{align*}
Therefore, 
\begin{align*}
    L_n(h_0,\hat g(h_0))- \min_{h}L_n(h,\hat g(h))\leq  2M(\Hcal,\Gcal) + \mu_n. 
\end{align*}
Hence, the $L_2$ rate guarantee is ensured since 
\begin{align*}
    \EE[\{\hat h(X)-h_0(X)\}^2]\leq L(\hat h,\bar g_0)-L(h_0,\bar g_0) \leq{2M(\Hcal,\Gcal) + \mu_n}. 
\end{align*}

\section{Auxiliary Lemmas}\label{sec:auxiliary}

\begin{lemma}
   $(x^{*},y^{*})$ is a saddle point of $f(x,y)$ over $(\Xcal,\Ycal)$ if and only if the strong duality holds and 
   \begin{align*}
     x^{*}  \in \argmin_{x \in \Xcal}\max_{y \in \Ycal}f(x,y), \quad 
     y^{*}  \in \argmax_{y \in \Ycal}\min_{x\in \Xcal}f(x,y). 
   \end{align*}
\end{lemma}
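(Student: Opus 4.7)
The plan is to prove the two directions of the stated equivalence separately. Both rely on nothing beyond the definition of a saddle point and the always-valid weak duality inequality $\sup_{y \in \Ycal} \inf_{x \in \Xcal} f(x,y) \leq \inf_{x \in \Xcal} \sup_{y \in \Ycal} f(x,y)$.

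For the ``only if'' direction, suppose $(x^*, y^*)$ satisfies the saddle inequalities $f(x^*, y) \leq f(x^*, y^*) \leq f(x, y^*)$ for all $x \in \Xcal$ and $y \in \Ycal$. The right inequality, holding uniformly in $x$, gives $f(x^*, y^*) = \min_{x \in \Xcal} f(x, y^*)$; symmetrically the left gives $f(x^*, y^*) = \max_{y \in \Ycal} f(x^*, y)$. Chaining these identities with weak duality produces
\begin{align*}
\min_{x \in \Xcal} \max_{y \in \Ycal} f(x,y) \;\leq\; \max_{y \in \Ycal} f(x^*, y) \;=\; f(x^*, y^*) \;=\; \min_{x \in \Xcal} f(x, y^*) \;\leq\; \max_{y \in \Ycal} \min_{x \in \Xcal} f(x,y) \;\leq\; \min_{x \in \Xcal} \max_{y \in \Ycal} f(x,y),
\end{align*}
so all inequalities collapse to equalities. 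This simultaneously delivers strong duality and the two argmin/argmax memberships.

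For the ``if'' direction, assume strong duality holds with common value $v$, and that $x^* \in \argmin_{x \in \Xcal} \max_{y \in \Ycal} f(x,y)$ and $y^* \in \argmax_{y \in \Ycal} \min_{x \in \Xcal} f(x,y)$. Then by definition $\max_{y \in \Ycal} f(x^*, y) = v = \min_{x \in \Xcal} f(x, y^*)$. The sandwich $\min_{x \in \Xcal} f(x, y^*) \leq f(x^*, y^*) \leq \max_{y \in \Ycal} f(x^*, y)$ forces $f(x^*, y^*) = v$, and unpacking the equalities $f(x^*, y^*) = \max_{y \in \Ycal} f(x^*, y)$ and $f(x^*, y^*) = \min_{x \in \Xcal} f(x, y^*)$ recovers the saddle inequalities in both directions.

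No step here is genuinely hard; the only subtlety is recognizing that the saddle inequalities are \emph{exactly} the statements that the two inner optimization problems are solved at the prescribed arguments with a common value, after which weak duality pins down everything. I would write the argument without any additional assumptions (no convexity, no compactness, no continuity), since the claim is purely about the abstract min-max structure.
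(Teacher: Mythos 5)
Your proof is correct and follows essentially the same route as the paper's: the forward direction chains the saddle inequalities against weak duality to force a collapse to equalities, and the backward direction sandwiches $f(x^*,y^*)$ between $\min_x f(x,y^*)$ and $\max_y f(x^*,y)$, which strong duality pins to a common value. The only cosmetic difference is that you state the invocation of weak duality explicitly while the paper leaves it implicit.
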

\begin{proof}
Suppose $(x^{*},y^{*})$ is a saddle point of $f(x,y)$ over $\Xcal,\Ycal$. Then, 
\begin{align*}
     \inf_{x\in \Xcal} \sup_{y \in \Ycal} f(x,y) \leq \sup_{y \in \Ycal} f(x^{*},y) \leq f(x^{*},y^{*})\leq \inf_{x\in \Xcal} f(x, y^{*})\leq \sup_{y \in \Ycal} \inf_{x\in \Xcal} f(x, y). 
\end{align*}
Hence, the strong duality holds. The above inequalities are actually equalities. Therefore, 
\begin{align*}
     \inf_{x\in \Xcal} \sup_{y \in \Ycal} f(x,y) = \sup_{y \in \Ycal} f(x^{*},y) = f(x^{*},y^{*}) = \inf_{x\in \Xcal} f(x, y^{*}) = \sup_{y \in \Ycal} \inf_{x\in \Xcal} f(x, y). 
\end{align*}
Hence, we have 
\begin{align*}
    x^{*} \in \argmin_{x\in \Xcal} \sup_{y \in \Ycal} f(x,y),\quad  y^{*} \in \argmax_{y \in \Ycal} \inf_{x\in \Xcal} f(x, y). 
\end{align*}

Next, suppose the strong duality holds, and 
\begin{align*}
         x^{*}  \in \argmin_{x \in \Xcal}\max_{y \in \Ycal}f(x,y), \quad 
     y^{*}  \in \argmax_{y \in \Ycal}\inf_{x\in \Xcal}f(x,y). 
\end{align*}
Then, we have 
\begin{align*}
   \max_{y\in \Ycal}\inf_{x\in \Xcal}f(x,y)= \inf_{x\in \Xcal}f(x,y^{*}) \leq f(x^{*},y^{*})\leq \sup_{y\in \Ycal} f(x^{*},y)= \min_{x\in \Xcal}\sup_{y\in \Ycal} f(x,y). 
\end{align*}
Finally, using the strong duality, the above is actually equality. Hence, 
\begin{align*}
    \inf_{x\in \Xcal}f(x,y^{*})= f(x^{*},y^{*}),\quad  \sup_{y\in \Ycal} f(x^{*},y)=f(x^{*},y^{*}). 
\end{align*}
This implies $(x^{*},y^{*})$ is a saddle point since
\begin{align*}
   \forall x\in \Xcal,\forall y\in \Ycal; f(x,y)\geq f(x^{*},y^{*})\geq f(x^{*},y). 
\end{align*}
    
\end{proof}

\begin{theorem}[Dudley integral]\label{thm:dudley}
Consider a function class $\Fcal$ containing $f:\Xcal \to \mathbb{R}$. Then, we have
\begin{align*}
    \Rcal_n(\Fcal)\leq \inf_{\epsilon\geq 0}\braces{4\epsilon + 12 \int_{\epsilon}^{\|\Fcal\|_{\infty}}\sqrt{\frac{\ln \Ncal(\tau,\Fcal,\|\cdot\|_{\infty})}{n}}\mathrm{d}\tau}.
\end{align*}
    
\end{theorem}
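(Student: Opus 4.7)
The plan is to use the classical chaining argument. The trivial case is $\|\Fcal\|_\infty = \infty$, in which the right-hand side is infinite, so assume $B \coloneqq \|\Fcal\|_\infty < \infty$ and fix any $\epsilon \in [0,B]$ (the case $\epsilon > B$ gives a trivial bound since $\sup_f \sum_i \sigma_i f(X_i) \leq nB \leq n \cdot 4\epsilon$, already better than $4\epsilon$). First, I would set up a geometric grid of resolutions $\epsilon_k \coloneqq 2^{-k} B$ for $k=0,1,2,\ldots$ and let $\Ccal_k \subset \Fcal$ be a minimum $\epsilon_k$-cover of $\Fcal$ in the sup norm, so $|\Ccal_k| = M(\epsilon_k,\Fcal,\|\cdot\|_\infty)$. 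Let $\pi_k(f)$ denote a nearest element of $f$ in $\Ccal_k$, with the convention $\pi_0(f) \equiv 0$ (which is an $\epsilon_0 = B$-cover since every $f\in\Fcal$ satisfies $\|f\|_\infty \leq B$). Choose $K$ as the smallest nonnegative integer for which $\epsilon_K \leq 2\epsilon$; then automatically $\epsilon_{K+1} > \epsilon/2$.

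Next, I would use the telescoping identity
\begin{align*}
  f(X_i) \;=\; \bigl(f(X_i) - \pi_K(f)(X_i)\bigr) + \sum_{k=1}^{K} \bigl(\pi_k(f)(X_i) - \pi_{k-1}(f)(X_i)\bigr),
\end{align*}
multiply by $\sigma_i$, sum over $i$, divide by $n$, take $\sup_{f\in\Fcal}$ and then $\EE$. The residual term contributes at most $\epsilon_K \leq 2\epsilon$ because $|f-\pi_K(f)| \leq \epsilon_K$ pointwise and $|\sigma_i| = 1$. For each chaining step $k$, the increment $\pi_k(f) - \pi_{k-1}(f)$ takes at most $|\Ccal_k|\cdot|\Ccal_{k-1}| \leq M(\epsilon_k,\Fcal,\|\cdot\|_\infty)^2$ distinct values as $f$ varies, and it has sup norm at most $\epsilon_k + \epsilon_{k-1} = 3\epsilon_k$, hence empirical $\ell_2$-norm at most $3\epsilon_k\sqrt{n}$. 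Massart's finite-class maximal inequality then gives
\begin{align*}
   \EE\Bigl[\sup_{f\in\Fcal}\tfrac{1}{n}\textstyle\sum_{i=1}^n \sigma_i \bigl(\pi_k f - \pi_{k-1} f\bigr)(X_i)\Bigr] \;\leq\; \frac{3\epsilon_k\sqrt{n}\cdot\sqrt{2\ln M(\epsilon_k,\Fcal,\|\cdot\|_\infty)^2}}{n} \;=\; 6\epsilon_k\sqrt{\tfrac{\ln M(\epsilon_k,\Fcal,\|\cdot\|_\infty)}{n}}.
\end{align*}

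Finally, I would convert the sum over $k$ to the integral. Since $\ln M(\cdot,\Fcal,\|\cdot\|_\infty)$ is nonincreasing, for each $k$,
\begin{align*}
  \epsilon_k\sqrt{\tfrac{\ln M(\epsilon_k,\Fcal,\|\cdot\|_\infty)}{n}} \;=\; 2(\epsilon_k-\epsilon_{k+1})\sqrt{\tfrac{\ln M(\epsilon_k,\Fcal,\|\cdot\|_\infty)}{n}} \;\leq\; 2\int_{\epsilon_{k+1}}^{\epsilon_k}\sqrt{\tfrac{\ln M(\tau,\Fcal,\|\cdot\|_\infty)}{n}}\,\mathrm{d}\tau,
\end{align*}
so telescoping and using $\epsilon_{K+1} \leq \epsilon$, $\epsilon_0 = B = \|\Fcal\|_\infty$ yields
\begin{align*}
   \sum_{k=1}^{K} 6\epsilon_k \sqrt{\tfrac{\ln M(\epsilon_k,\Fcal,\|\cdot\|_\infty)}{n}} \;\leq\; 12\int_{\epsilon}^{\|\Fcal\|_\infty} \sqrt{\tfrac{\ln M(\tau,\Fcal,\|\cdot\|_\infty)}{n}}\,\mathrm{d}\tau.
\end{align*}
Combining the residual and chaining bounds produces $\Rcal_n(\Fcal) \leq 2\epsilon + 12\int_\epsilon^{\|\Fcal\|_\infty}\sqrt{\ln M(\tau,\Fcal,\|\cdot\|_\infty)/n}\,\mathrm{d}\tau$, which (after absorbing a small slack in the choice of $K$ so that $\epsilon_{K+1} \geq \epsilon$ strictly) is at most $4\epsilon + 12\int_\epsilon^{\|\Fcal\|_\infty}(\cdot)\mathrm{d}\tau$. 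Taking the infimum over $\epsilon \geq 0$ finishes the proof.

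The main technical obstacle is bookkeeping: precisely choosing $K$ so that the dyadic residual $\epsilon_K$ and the integral lower limit $\epsilon_{K+1}$ both align with $\epsilon$ to deliver the stated constants $4$ and $12$; this amounts to matching the discrete chain's endpoints to the continuous integral while keeping the Massart constants tight. Everything else (sub-Gaussian maximal inequality, monotonicity of $\ln M$, telescoping) is routine.
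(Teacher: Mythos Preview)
The paper does not actually prove this statement: Theorem~\ref{thm:dudley} is listed in the ``Auxiliary Lemmas'' section as a standard result and is invoked without proof (it is used only to pass from Corollary~\ref{cor:rademacher} to Corollary~\ref{cor:iv}). So there is no ``paper's own proof'' to compare against.

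That said, your argument is the standard chaining proof and is correct in substance. The dyadic telescoping, Massart's finite-class bound on each link, and the monotonicity-based conversion of the sum to an integral are all the right ingredients. You have already identified the only delicate point yourself: aligning the discrete endpoints $\epsilon_K,\epsilon_{K+1}$ with the continuous limits $\epsilon,\|\Fcal\|_\infty$ so that the constants come out as $4$ and $12$. One clean way to close that bookkeeping is to pick $K$ as the largest integer with $\epsilon_K\geq\epsilon$, so that $\epsilon_K\in[\epsilon,2\epsilon)$; the residual then contributes at most $2\epsilon$, while the extra piece $\int_{\epsilon_{K+1}}^{\epsilon}\sqrt{\ln M(\tau)/n}\,\mathrm d\tau$ of the integral (length at most $\epsilon/2$) is absorbed by loosening $2\epsilon$ to $4\epsilon$ after noting that this piece is bounded by $6\epsilon_K\sqrt{\ln M(\epsilon_K)/n}$, which you have already counted once. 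Alternatively, many references simply quote the bound with unspecified constants, in which case the alignment issue disappears.
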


\end{document}